\newtheorem{theorem}{Theorem}[section]
\newtheorem{proposition}[theorem]{Proposition}
\newdimen\nodeDist
\title{SBAMDT: Bayesian Additive Decision Trees with Adaptive Soft Semi-multivariate Split Rules}
\author[1]{Stamatina Lamprinakou}
\author[1]{Huiyan Sang}
\author[2]{Bledar A. Konomi}
\author[3]{Ligang Lu}
\affil[1]{Department of Statistics, Texas A\&M University, College Station, USA \\
          \texttt{matina@tamu.edu, huiyan@stat.tamu.edu}}
\affil[2]{Department of Mathematical Sciences, University of Cincinnati, Cincinnati, OH, USA \\
          \texttt{konomibr@ucmail.uc.edu}}
\affil[3]{Shell International Exploration and Production Inc, Houston, TX, USA \\
          \texttt{ligang.lu@shell.com}}
\begin{document}
\maketitle

\begin{abstract}  Bayesian Additive Regression Trees \citep[BART,][]{chipman2010bart} have gained significant popularity due to their remarkable predictive performance and ability to quantify uncertainty. However, standard decision tree models rely on recursive data splits at each decision node, using deterministic decision rules based on a single univariate feature. This approach limits their ability to effectively capture complex decision boundaries, particularly in scenarios involving multiple features, such as spatial domains, or when transitions are either sharp or smoothly varying. In this paper, we introduce a novel probabilistic additive decision tree model that employs a soft split rule. This method enables highly flexible splits that leverage both univariate and multivariate features, while also respecting the geometric properties of the feature domain. Notably, the probabilistic split rule adapts dynamically across decision nodes, allowing the model to account for varying levels of smoothness in the regression function. We demonstrate the utility of the proposed model through comparisons with existing tree-based models on synthetic datasets and a New York City education dataset.\end{abstract}


\textbf{Keywords:}
  Bayesian additive regression trees, Complex Domains, Graph Partitions, Minimum Spanning Tree, Soft Clustering, Spatial Nonparametric Regression.

\section{Introduction}
\label{sec:intro} 

We consider a non-parametric regression problem with response $Y\in R$ and a number of features with known multivariate structures, $s \in \mathcal{M}$ (structured) and unknown structures, $\bfx \in \mathcal{X}\subseteq R^p,\ p>0$ (unstructured) ,
\begin{equation}\label{eq:model}
    Y=f(\bfd) + \epsilon,\ \bfd=\left(\bfs,\mathbf{x}\right),\ \epsilon \overset{\text{iid}}\sim N(0,\sigma^2),
\end{equation}
where $f: \calD\rightarrow R$ is an unknown function defined on the joint input feature space $ \calD\subseteq \mathcal{M} \times \mathcal{X}$, and $\sigma^2$ is an unknown noise variance. 
For example, in a spatial nonparametric regression problem where the response variable $Y$ is the housing price, the structured multivariate features $\bfs$ represent the spatial coordinates, and the unstructured features $\bfx$ consist of other house features. Here, we allow the domain of the structured features to have complex geometries, such as road networks, brain cortical surfaces, and cities with inner lakes. The goal is to predict the response value at new locations as well as understand the relationships between features and response. 
Such problems find wide applications in real estate, public health, social sciences, and environmental studies, to name a few.   
 
Various frequentist and Bayesian methods that employ an ensemble of trees have become popular approaches for estimating nonparametric function $f(\cdot)$. 
Boosting~\citep{freund1997decision}, bagging~\citep{breiman1996bagging}, and random forests~\citep{breiman2001random} are examples of frequentist techniques. The Bayesian Additive Regression Trees (BART) model~\citep{chipman2010bart} has also gained widespread attention because of its exceptional performance and the resulting uncertainty measures. 
The BART model has been applied and extended to various contexts including non-parametric normal response mean regression \citep{chipman2010bart}, classification~\citep{chipman2010bart,zhang2010bayesian,kindo2016multinomial}, variable selection~\citep{chipman2010bart, bleich2014variable,linero2018bayesian2}, estimation of monotone functions~\citep{chipman2021mbart}, causal inference~\citep{hill2011bayesian}, survival analysis~\citep{sparapani2016nonparametric}, heteroscedasticity \citep{bleich2014bayesian, pratola2016efficient}, analysis of log-linear models~\citep{murray2017log} and estimating the intensity of a Poisson process~\citep{lamprinakou2023bart}. Several theoretical studies of BART models~\citep{10.1214/19-AOS1879,  pmlrv89rockova19a, linero2018bayesian} have recently established the posterior convergence rates to provide asymptotic justifications of these methods.

However, the axis-alignment characteristic of the BART model is a major limitation for the scenarios we considered in \eqref{eq:model}: the imposed prior on the trees only allows to split the feature space using one feature at a time,  which can only 
partition the domain into hyper-rectangular regions, restricting the model’s flexibility to account for dependence structures of multivariate features in complex domains when making splits. Numerous frequentist papers have emerged proposing methods for dealing with the axis-alignment effects of decision trees~\citep{NEURIPS2019_043c2ec6,fan2016ostomachion,JMLR:v21:18-664,rainforth2015canonical,rodriguez2006rotation,blaser2021regularizing,blaser2016random}. Bayesian non-parametric methods have also been developed to allow more flexible non-axis-aligned partitioning, including the Ostomachion process~\citep{fan2016ostomachion}, the Random Tessellation Process~\citep{NEURIPS2019_043c2ec6}and the Bayesian Additive Voronoi Tessellations~\citep{stone2024addivortes}. To address this limitation in the BART framework, \citet{MAIA2024107858} incorporate Gaussian process priors for the predictions at each terminal node of the trees.

\citet{luo2022bamdt} introduced the Bayesian additive semi-multivariate decision trees (BAMDT) model to overcome the BART model's axis-alignment limitation. A semi-multivariate decision tree (sMDT) introduced by \cite{luo2022bamdt} divides the joint input feature space $\calD$ into disjoint subsets using multivariate splits for structured features and univariate splits for unstructured features. The multivariate splits provide pliable-shaped partitions that respect the intrinsic geometry and the boundary conditions. However, BAMDT cannot adapt to higher smoothness levels of the unknown regression function of interest, which hinders its performance in many application problems where spatial patterns in some areas are rather smooth.  
Recently, \cite{linero2018bayesian2} proposed an extension of the original axis-parallel BART model, called the soft BART (SBART) model, which introduced a soft split scheme to replace the hard axis-parallel split rule to allow the model to capture a higher degree of smoothness in the unknown regression function. They proved that the posterior distribution of the nonparametric function $f$ concentrates around any unknown true function in the $\alpha$-H{\"o}lder space, for any $\alpha>0$, at near minimax rate.

Motivated by the success of SBART, we develop a new Bayesian additive decision tree model, called SBAMDT, for modeling nonparametric functions in \eqref{eq:model} with both structured and unstructured features. We introduce a novelty decision rule that assumes a mixture of hard and soft decisions at each internal node while allowing the adaptive soft split decision to depend on multivariate features.  
The model is capable of generating a decision boundary at each decision node that either has sharp changes or adaptive smooth transitions. We highlight that our soft decision rule has two major differences from SBART. First, SBART enforces a soft decision rule on every node, while SBAMDT learns a node-specific decision in each tree that permits some internal nodes to make hard decisions. Second, the soft data assignment probability in SBART depends on the distance from a univariate feature value to the cut point. However, such distance is not easily generalized to the multivariate feature manifold case. SBAMDT takes into account both structured and unstructured features, where  To address this challenge, we propose a new approach using distances from each observation to their neighboring reference knot sets to find the probabilities of going left and right at each internal node.


The outline of the paper is as follows. Section~\ref{sec::sMDT} is a review of sMDT used in \cite{luo2022bamdt}. Section~\ref{sec:ssMDT} introduces the adaptive soft Semi-Multivariate Decision Tree, and Section~\ref{sec:inference} presents the proposed inference algorithm. Sections~\ref{sec:SD} and~\ref{sec:NYD} present the application of the algorithm to synthetic data and real data sets, respectively. Section~\ref{sec:conc} provides our conclusions.

\section{Review of Semi-Multivariate Decision Tree}\label{sec::sMDT}
We start by reviewing the notion of a binary tree in order to present the definition of the soft semi-multivariate decision tree. A binary decision tree is a type of tree-data structure that is made up of terminal nodes (also called leaves) and internal nodes. Each internal node $\eta$ is associated with a subset of data, denoted as $D_{\eta}$, and a decision rule that splits $\eta$ into two offspring nodes $\eta_1$ and $\eta_2$. Accordingly, the feature space is split recursively into disjoint subsets, each of which is associated with a terminal node, following the decision path from the root to the leaf.  

Conventionally, the decision rule at an internal tree node has the form $x>c$ or $x<c$ using only one feature $x$ at a time, where $c$ is the cut point in the domain of $x$. \cite{luo2022bamdt} introduced the semi-multivariate decision tree (sMDT) that allows an internal node to split either based on multivariate structured features, $\bfs \in \mathcal{M}$ or a univariate unstructured feature, $x \in \mathcal{X}\subseteq R^p$. 
\citet{luo2022bamdt} modeled the univariate split following the same method as in standard BART introduced by \citet{chipman2010bart}. In a multivariate split, 
\citet{luo2022bamdt} employs a bipartition model via
predictive spanning trees. Specifically, a random spanning tree graph is constructed connecting a set of reference points from the multivariate structured feature space. A graph bipartition model is then developed to split the finite set of reference points. For any arbitrary given data, it is assigned to its nearest reference point in the space of $\calM$. Accordingly, the input feature space is recursively partitioned into subspaces with much more flexible shapes than hyper-rectangulars by standard BART. \citet{luo2022bamdt} illustrated the great utility of this method for spatial regression problems on various domains.  

However, there are two main drawbacks of the method proposed in \citet{luo2022bamdt}. 
First, due to the hard assignment nature of the decision rule, sMDT cannot adapt to the higher smoothness levels of the unknown regression function of interest. Indeed, previous work has shown that the use of piecewise constant basis functions in traditional BART models enables only adaptation to functions in H{\"o}lder space which at most have smoothness $\alpha<1$. 
However, many applications problems involving spatial data may have smoother spatial variations in certain local areas.  Second, \citet{luo2022bamdt} employed the minimum spanning tree (MST) of the k nearest neighbour graph (k-NN) which requires the computation of the geodesic distance between points. Although the geodesic distance has the advantage of respecting manifold structures, it is usually more difficult to compute than the Euclidean distance, which hinders the scalability of sMDT for large-scale data sets.

\section{Adaptive Soft Semi-Multivariate Additive Decision Trees} 
\label{sec:ssMDT}
We propose a new adaptive sMDT with a mixture of hard and soft decision types to address the two limitations of the original sMDT.  The latent regression function $f(\cdot)$ in \eqref{eq:model} is modeled by a sum of $m$ hard-soft Semi-Multivariate Decision Trees,
\begin{equation*}
    f(\bfd)=\sum_{h=1}^m g(\bfd;T^{(h)},\bfM^{(h)}),
\end{equation*}
where $g(\bfd;T^{(h)},\bfM^{(h)})$ denotes the $h$-th hard-soft sMDT weak learner, $T^{(h)}$ denotes the topology of the decision tree, $\bfM^{(h)}=\left(\mu_{h1},\cdots,\mu_{hL_h}\right)$ is a collection of leaf parameters and $L_h$ is the number of leaves.

We begin by introducing the hard partition model for splitting a finite set of reference knots on $\calD$, which shall be used in Section~\ref{subsec:softmodel} to define the mixture of hard and soft split rules to partition the domain $\calD$ and build the probabilistic assignment of observations based on a new distance metric.


\subsection{Hard semi-multivariate decision rule model of reference points}\label{subsec:refmodel}

Let $D^*=\{\left(x_1^*,s_1^*\right),\cdots,\left(x_t^*,s_t^*\right)\}\subseteq \calD $ be a finite set of reference knots on $\calD$. In this study, we draw $D^*$ by randomly sampling a subset of the observed features. We propose a generative prior to recursively split \(D^*\), starting with a root node including the entire \(D^*\). A leaf node is selected for further splitting with probability \(p_{\text{split}}(\eta)\). If a node is chosen to split, a \emph{multivariate} split using the structured features \(\bfs^*\) is performed with probability \(p_m\); otherwise, a \emph{univariate} split using \(x^*\) is performed. 

In a univariate split, $D_{\eta}^*$ is divided into 
\begin{align*}
D_{\eta,1}^*=\{(x^*,\bfs^*)\in D_{\eta}^*: x_{j(\eta)}^* \leq c_{\eta}\}, \quad D_{\eta,2}^*=D_{\eta}^*\backslash D_{\eta,1}^*
\end{align*}
where $x_{j(\eta)}^*$ is the $j_{th}$ structured feature and $c_{\eta}$ the split value of the rule at node $\eta$. For each unstructured feature, we utilize 100 grid points that are equally spaced as potential univariate split cutoff values. After uniformly choosing one of the unstructured features, a uniform selection is made from the available split values associated with that feature. 

In a multivariate split, we assume there is a spanning tree graph, $G_T^*$, whose vertex corresponds to each reference point and edge set encodes the neighborhood structure of the reference points in the manifold multivariate structured feature space $\calM$. Spanning tree enjoys many nice properties for the partition problems of structured data \citep{,luo2022bamdt}.  
Removing one edge from a spanning tree, we easily obtain a contiguous bipartition of $G_T^*$ into two sub-spanning trees whose vertex sets are connected components, which is ideal for obtaining recursive bipartitions of reference points needed in decision tree models. Figure 1(a) shows an example of the semi-multivariate decision tree split that includes both multivariate and univariate splits. Figure 1(b) show an example of a bipartition of $G_T^*$ that connects spatial bins covering Texas, USA, and the resulting two disjoint reference point sets after removing an edge. 

\begin{figure}[H]   
\begin{subfigure}{7 cm}
    \centering\includegraphics[width=6cm]{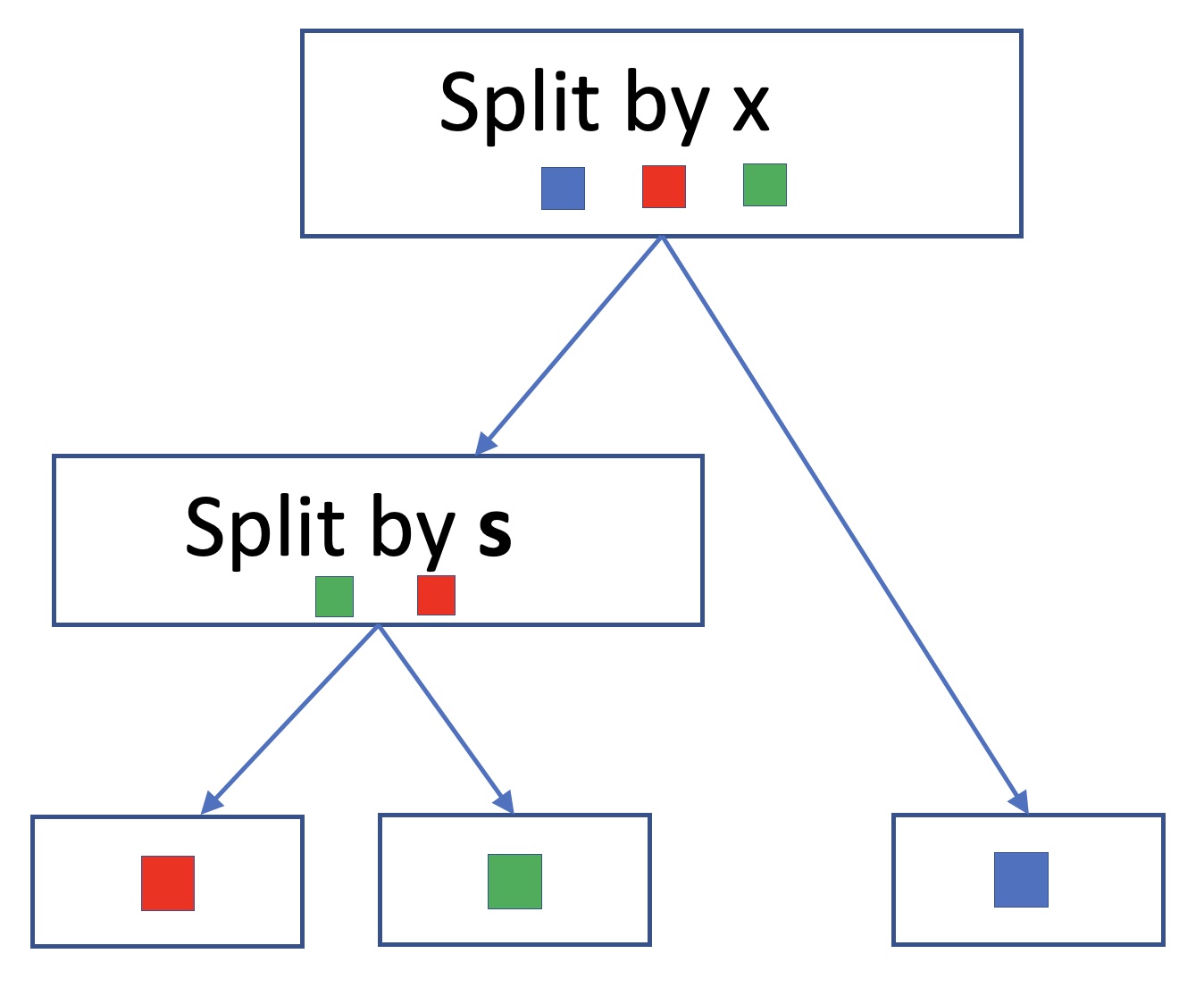}
    \caption{}
  \end{subfigure}
\begin{subfigure}{7 cm}
    \centering\includegraphics[width=6cm]{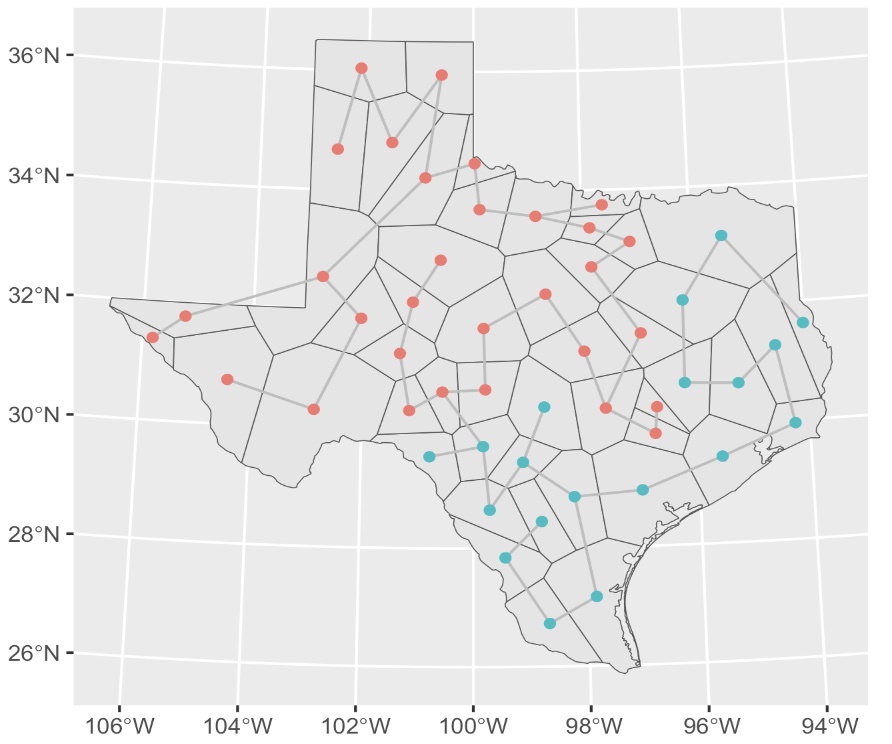}
    \caption{}
 \end{subfigure}
 \caption{(a) An example of a semi-multivariate decision tree; (b) A bipartition of the spanning tree graph  $G_T^*$ into two disjoint reference point sets represented by red and blue colors, respectively. } \label{Fig::demo}
\end{figure}
An important component of the method is the use of the spanning tree. First, we construct a weighted adjacency graph among reference points and then generate its minimum spanning tree, which is defined as the spanning tree minimizing the total edge weight. To address the shortcomings of using geodesic distance-based k-NN graphs as the adjacency graph, we consider the idea of manifold embedding via a graph Laplacian, allowing one to use the ordinary Euclidean distance in the embedded space to approximate distances on the original manifold and construct ~\citep{crane2020survey}. Specifically, we first construct the initial similarity graph by connecting all points on $S^*=\{s_1^*,\cdots,s_t^*\}$ with each other and weighting all edges by their similarity, given by the Gaussian function $f_s(s_i^*,s_j^*)= e^{\left(-||s_i^*-s_j^*||^2\right)}$.  We then compute the normalized graph Laplacian $L$~\citep{shi2000normalized}. Let $U$ be the matrix containing the $k$ eigenvectors of $L$ corresponding to the $k$ smallest non-zero eigen-values, and $\tilde{S}^*=\{u_1^*,\cdots,u_t^*\}$ be the resulting embedded coordinates of the structured reference knots set $S^*$, where $u_i^*$ corresponds to the $i$-th row of the matrix $U$. Embedding via Laplacian graphs has been used in relation to dimensionality reduction, data representation, and diffusion processes~\citep{coifman2006diffusion, belkin2003laplacian,gobel1974random}. \citet{dunson2022graph} recently defined the notion of graph-based Gaussian processes, where the covariance matrix is constructed by such an embedding. 

At each node $\eta$, let $S_{\eta}^*$ and $\tilde{S}_{\eta}^*$ denote the structured reference knots subset on manifold and the embedded structured reference knots subset associated with $\eta$, respectively. The generative bipartition prior for the multivariate split proceeds as follows: we randomly pick two distinct structured reference knots in $\tilde{S}_{\eta}^*$, denoted as $u^*$ and $t^*$. There exists a unique path in $G_T^*$ connecting these two knots. By randomly removing an edge from this path, we obtain two disjoint subsets $\tilde{S}_{\eta,1}^{*}$ and $\tilde{S}_{\eta,2}^{*}$. Equivalently, we also obtain two disjoint subsets $S_{\eta,1}^{*}$ and $S_{\eta,2}^{*}$ in the original manifold space, 
resulting in dividing $D_{\eta}$ into 
\begin{align*}
D_{\eta,1}^*&=\{(x^*,\bfs^*)\in D_{\eta}^*: \bfs^* \in S_{\eta,1}^{*}\} \\
D_{\eta,2}^*&=D_{\eta}^*\backslash D_{\eta,1}^*.
\end{align*}

\subsection{An adaptive Soft semi-multivariate decision rule model}\label{subsec:softmodel} 
Given the recursive bipartition of the reference knots and a data point $\bfd$, we assume that the decision of assigning a data point $\bfd$ to one of the two children nodes is a mixture of hard and soft decisions with different levels of smoothness, and the decision rule under each type of decision is a function depending on the distance of $\bfd$ to its nearest reference knots in the two children nodes. This mixed decision-making allows us to capture both sharp changes and smooth variations in the regression tree function. 

Let  $A_{\eta}^{(h)}$ denote the type of decision at node $\eta$, taking values from $0,1,\ldots, k$. 
We associate a hard decision with node $\eta$ if $A_{\eta}^{(h)}=0$ such that $\bfd$ is assigned to the left or right with probability 1. When $A_{\eta}^{(h)}\in \{1,\ldots,k\}$, a soft decision is adopted at node $\eta$ so that $\bfd$ is either assigned to the left or right child with certain probabilities. We assume that $A_{\eta}^{(h)}$ follows a categorical prior distribution and is independent across internal nodes. Specifically,
$A_{\eta}^{(h)} = \mbox{Categorical}(p_0,p_1,\cdots,p_{k}), \text{ where } p_0+\sum_{i=1}^{k}p_i=1$.

Let $z_{\eta L}^{(h)}(\bfd)|(A_{\eta}^{(h)}=c)$ denote the decision rule function, i.e., the probability of $\bfd$ going left when the $c$-th type of decision is adopted. Let $d_{\eta L}^{(h)}(\bfd)$ and $d_{\eta R}^{(h)}(\bfd)$ denote the distances between $\bfd$ and the nearest knots assigned to the left and right children on the space of features used for splitting $\eta$, respectively (see Sections~\ref{sec::sMDT} and~\ref{subsec:refmodel}). It is reasonable to assume that when $d_{\eta L}^{(h)}(\bfd)<d_{\eta R}^{(h)}(\bfd)$, $\bfd$ is more likely to belong to the left child node. Therefore, we assume  $z_{\eta L}^{(h)}(\bfd)|(A_{\eta}^{(h)}=c)$ is a monotone function of $d_{\eta R}^{(h)}-d_{\eta L}^{(h)}$. 

In the hard case, we assume
\begin{align*}
z_{\eta L}^{(h)}(\bfd) | \left( A_{\eta}^{(h)}=0\right) &=  
\begin{cases}
      1, & \text{if } d_{\eta L}^{(h)}(\bfd) \leq d_{\eta R}^{(h)}(\bfd)  \\
      0, &\text{otherwise} \\
\end{cases} 
\end{align*}


In the soft case,  we introduce a monotone logistic gate function with scaling parameters to transform $d_{\eta R}^{(h)}-d_{\eta L}^{(h)}$ to a value in $[0,1]$ as follows
\begin{eqnarray*}
z_{\eta L}^{(h)}(\bfd)| (A_{\eta}^{(h)}=c)  &=\frac{1}{1 + e^{-\alpha_c^{(h)}\frac{d_{\eta R}^{(h)}(\bfd)-d_{\eta L}^{(h)}(\bfd)}{C_\eta^{(h)}}}} \\
z_{\eta R}^{(h)}(\bfd)|(A_{\eta}^{(h)}=c)  &= 1-z_{\eta L}^{(h)}(\bfd)| (A_{\eta}^{(h)}=c) 
\end{eqnarray*}
where  $C_\eta^{(h)}$ is the maximum of the distances from the nearest left and right knots over all observations introduced to normalize the distances to make them unit-free, and $\alpha_c^{(h)}$ is a decision-specific and tree-specific softness control parameter that scales the normalized distance adaptive to different decisions. The smaller the value of $\alpha$, the smoother the decision boundary. 

We consider two approaches to model the soft case as follows.

\noindent
\textbf{Approach 1.}
In this case, we only consider one soft decision type, i.e., $k=1$, and we shall refer to this approach as S2-BAMDT. We also drop the subscript from $\alpha_1^{(h)}$ for notation simplicity. 
We treat  $\alpha^{(h)}$ as a positive continuous random variable distinct for each tree and 
shall describe its prior model in Section \ref{subsec:priormodel}. It follows that  
\begin{equation*}
    z_{\eta L}^{(h)}(\bfd)|(A_{\eta}^{(h)}=1)   = \frac{1}{1 + e^{-\alpha^{(h)}\frac{d_{\eta R}^{(h)}(\bfd)-d_{\eta L}^{(h)}(\bfd)}{C_\eta^{(h)}}}} . 
\end{equation*}  
This approach requires all the internal nodes of the same tree to have the same smoothness level if their decisions are soft. 
However, we allow each tree to learn its distinct smoothness control parameter $\alpha^{(h)}$ to raise the additive model's flexibility in capturing functions with different levels of smoothness. This way of introducing softness by tree-specific smoothness control parameters resembles that of the soft BART method in \citet{linero2018bayesian}.
However, we emphasize that a key distinction of our approach is the ability to learn a node-specific decision in each tree, allowing some internal nodes to have hard decisions instead of imposing a soft decision rule for all nodes. Another important distinction between our approach and \citet{linero2018bayesian} is that we account for both structured and unstructured features, and we use knot sets at each branch to calculate the distance from each observation and find the probabilities of going left and right at each internal node.

\noindent
\textbf{Approach 2.} In this case, we assume the soft decision at each node is chosen from a set of soft decisions with different levels of smoothness control parameters $\{\alpha_1^{(h)},\ldots, \alpha_{k}^{(h)}\}$, and we refer to this method as Sk-BAMDT. Unlike S2-BADMT, which uses a common smoothness control parameter for the entire tree and relies on varying it to adapt to different smoothness levels, Sk-BAMDT allows for node-specific decisions with varying smoothness control parameters. This enables the model to better adapt to varying smoothness even within a single tree. Moreover, the shallow depth of the trees limits the information available to infer the posterior of \( \alpha_{c}^{(h)} \) especially for higher values of \( k \). Therefore, we recommend fixing $\{\alpha_1^{(h)},\ldots, \alpha_{k}^{(h)}\}$ at a small discrete set of values $\{\alpha_1,\ldots, \alpha_{k}\}$ that are common across all trees. It follows that
\begin{equation*}
    z_{\eta L}^{(h)}(\bfd)|(A_{\eta}^{(h)}=c)   = \frac{1}{1 + e^{-\alpha_c^{(h)}\frac{d_{\eta R}^{(h)}(\bfd)-d_{\eta L}^{(h)}(\bfd)}{C_\eta^{(h)}}}}, \text{ for } c=1,\ldots, k. 
\end{equation*}Compared to Sk-BAMDT, which is sensitive to improperly pre-established smoothness levels, S2-BAMDT is less sensitive to the starting value of the smoothness level per tree.

Figure~\ref{Fig::demo} compares hard and soft  decision trees as derived by~\citet{linero2018bayesian} (SBART) and the hard-soft decision trees of Sk-BAMDT and S2-BAMDT. Sk-BAMDT and S2-BAMDT are two distinct methods for generating hard-soft decision trees that adhere to boundary restrictions and inherent geometry, as seen in figure~\ref{Fig::demo}. In contrast to the single smoothness level applied in S2-BAMDT, the soft boundaries of Sk-BAMDT are determined by two distinct levels of smoothness.




\begin{figure}[H]   
    \centering\includegraphics[width=15cm]{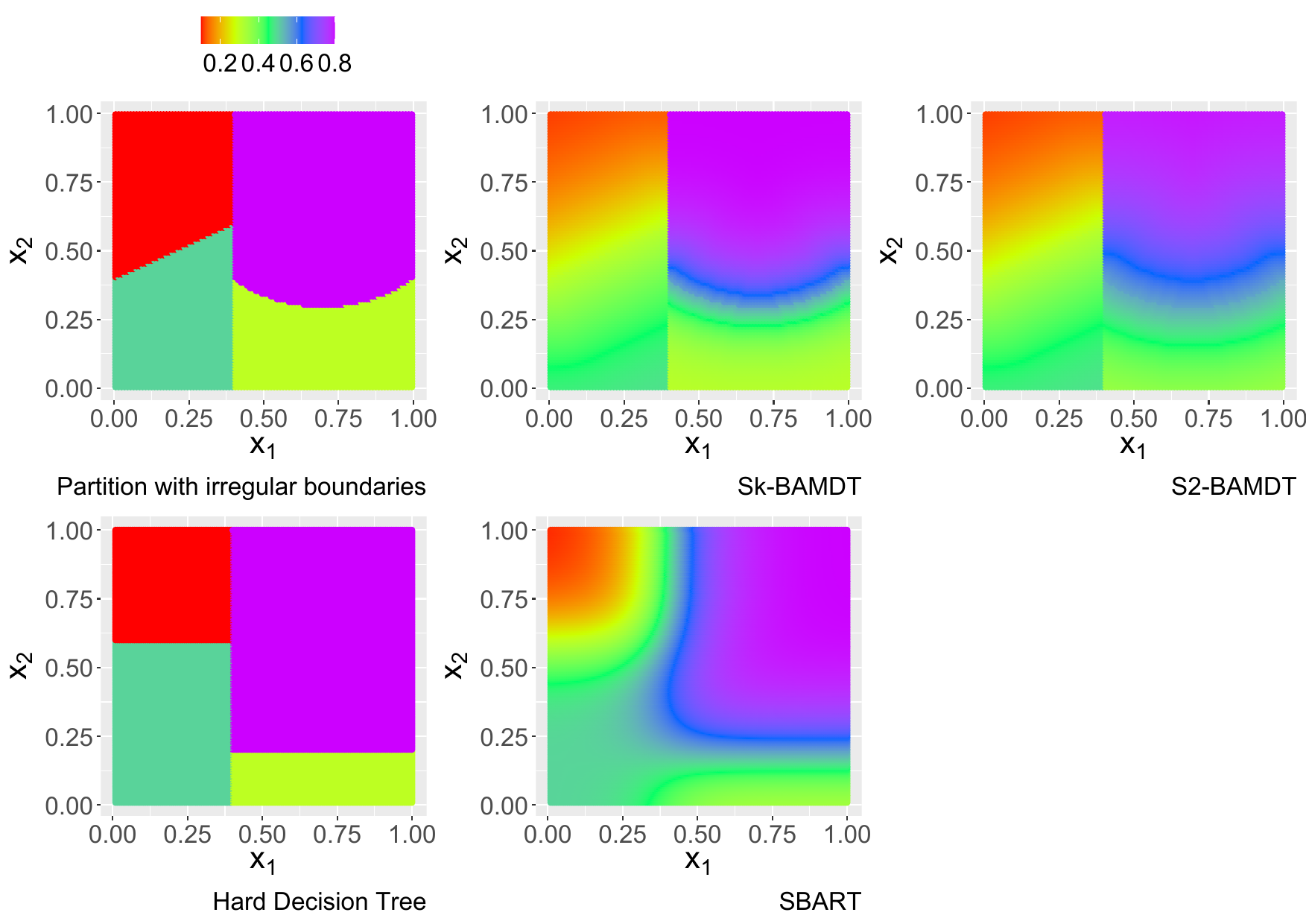}
   
  \caption{Comparison of a hard decision tree to a soft decision tree as derived by Linero (SBART) and the hard-soft decision trees of Sk-BAMDT and S2-BAMDT. We have used a logistic function with a bandwidth parameter equal to 0.08 for SBART, $\{\alpha_1,\alpha_2\}\times q=\{0.5,1\}\times 12$ for Sk-BAMDT, and $\alpha^{(h)}\times q= 0.5\times 12$ for S2-BAMDT. } \label{Fig::Circles}
\end{figure}

Conditional on all the internal node decisions $\bfA^{(h)}\coloneqq \{A_{\eta}^{(h)}\}_{\eta \in T^{(h)}}$ of the $h$-th decision tree $T^{(h)}$, 
the probability of a data point $\bfd$ falling into the terminal node $l$ is given by going down the probablistic decision path of $T^{(h)}$,

\begin{equation}\label{eq:phi}
\bm{\Phi}_{hl}(\bfd)|\bfA^{(h)}=\prod_{\eta\in P_{hl}(\bfd)} [z_{\eta L}^{(h)}(\bfd)| A_{\eta}^{(h)}]
^{\mathds{1}\left(r_{\eta L}^{(h)}=1\right)}\big(1-[z_{\eta L}^{(h)}(\bfd)| A_{\eta}^{(h)}]\big)^{1-\mathds{1}\left(r_{\eta L}^{(h)}=1\right)},
\end{equation}
where $P_{hl}(\bfd)$ is the path from the root to the terminal node $l$, $\mathds{1}(\cdot)$ denotes the indicator function, $r_{\eta L}^{(h)}(\bfd)$ is a binary variable with $r_{\eta L}^{(h)}(\bfd)=1$ if the path goes left at node $\eta$.

Finally, the additive decision trees model takes the form,
\begin{equation}\label{eq:sumtrees}
    f(\bfd)=\sum_{h=1}^m g(\bfd;T^{(h)},\bfM^{(h)})=\sum_{h=1}^{m}\sum_{l=1}^{L_h}\mu_{hl}\bm{\Phi}_{hl}(\bfd),
\end{equation} where the value of each decision tree function is the probability-weighted average of leaf weights.

\subsection{Priors of other model parameters.}\label{subsec:priormodel}
Given a fixed number of trees, $m$, the parameters of the model are the tree components, $(\bfT, \bfM)=\{T^{(h)},\bfM^{(h)}\}_{h=1}^m$, the control parameters $\pmb{\alpha}$, the probabilistic decisions at each internal node, $\bfA=\{\bfA^{(h)}\}_{h=1}^m$, the corresponding 
event probabilities parameters $\bfp_{A}= (p_0,p_1,\ldots,p_k)$ in the categorical distribution for $\bfA$, 
the residual variance, $\sigma^2$, and the leaf parameter variance, $\sigma^2_{\mu}$. 

Following~\citet{chipman2010bart}, we assume that the tree components are independent of each other and that the terminal node parameters of every tree are independent, so that the prior can be factorized as:
\begin{align*}
    P(\bfT,\bfM,\bfA,\pmb{\alpha},\bfp_A, \sigma^2,\sigma_{\mu}^2)
    =& \left(\prod_h P(T^{(h)})\prod_{\eta}P\left(\mu_{h\eta}|T^{(h)} \right) P\left(A_{\eta}^{(h)}|\bfp_A,T^{(h)},\pmb{\alpha}\right) \right)\\& \times P(\pmb{\alpha})P(\bfp_A)P(\sigma^2) P(\sigma^2_{\mu}).
\end{align*} 

\paragraph{Prior on the probabilistic decisions at internal nodes.}
We use a categorical prior distribution to model  the probabilistic decisions $\bfA$ whose event probabilities follows the following prior distribution
\begin{align*}
\bfp_A &\sim \mbox{Dirichlet}(\pmb{\psi}),\ \pmb{\psi}=\left(\psi_0,\psi_1,\cdots, \psi_{k} \right),\ \psi_0,\cdots,\psi_{k} >0,
\end{align*}
We suggest $\psi=\left(1,\cdots,1\right)$ as the default option in Sk-BADMT which works well in our numerical studies. In S2-BAMDT with only two decisions, the categorical distribution is reduced to a Bernoulli distribution, and we assume a beta distribution prior to the probability of having a hard decision. We use $\text{Beta}(1,2)$ as the default choice in all our numerical studies. 
\paragraph{Prior on the softness control parameter.}
 These parameters allow us to appropriately rescale the normalized distance to adapt the probabilities $\{z_{\eta L}^{(h)}(\bfd)| A_{\eta}^{(h)} \neq 0\}$. 
 Sk-BADMT assumes a fixed finite set of $k$ values, $\balpha=\{\alpha_1,\cdots,\alpha_{k}\}$ corresponding to each soft category of $A_{\eta}^{(h)}$. In our numerical examples, we used $\{0.5,1,2\}$ times a positive constant $q\geq 1$ for $\balpha$ while tuning the choice of $q$ in different studies, which seems to work well in practice. Within the range of values between 1 and 13, we choose those values of the
tuning parameter $q$ that give the most accurate predictions based on Watanable-Akaike information criterion~\citep{gelman2014understanding} and cross-validation~\citep{gelman2014understanding} for each simulation scenario.  
In S2-BADMT, we use a gamma distribution with shape $\alpha_g$ and rate $\beta_g$ to model the control parameter, $ \alpha^{(h)}\sim \mbox{Gamma}(\alpha_g,\beta_g)$. 
We set $\alpha_g=1$ and $\beta_g=0.5$ in our numerical studies.

\paragraph{Prior on the leaf weight parameters}
After rescaling $Y$ into [-0.5, 0.5], we use the conjugate normal distribution for $\mu_{hl}$, conditional on $T^{(h)}$ :
\begin{align*}
    \mu_{hl}|T^{(h)},\sigma^2_{\mu} &\sim \mbox{N}\left(0, \sigma^2_{\mu}\right),  \quad \sigma^2_{\mu} \sim \mbox{Inv-Gamma}(\alpha_\mu,\beta_\mu). 
\end{align*}We apply an Inverse-Gamma prior to the variance of the leaf weight parameter, $\sigma_{\mu}^2$, following~\citet{he2023stochastic}, with hyperparameters $\alpha_{\mu} = 3$ and $\beta_{\mu} =0.5\times\mathrm{Var}(y)/m$.
\paragraph{Prior on the residual variance}
In line with the original BART model~\citep{chipman2010bart}, we use a conjugate inverse-$\chi^2$ prior for the residual variance, that is, $\sigma^2 \sim v\lambda/\chi^2_v$ for $v=3$, and select $\lambda$, such that $P(\sigma^2<\hat{\sigma}^2)=0.90$ a priori, where $\hat{\sigma}^2$ is the sample variance of the responses. 

\paragraph{Prior on the decision trees.}
Following \citet{chipman2010bart}, we adopt the Galton-Watson process~\citep{harris1963theory} to model the tree prior for the recursive hard bipartitions of reference points.  Each node has either zero or two offspring. The probability of splitting a node depends on its depth on the tree, taking the form:
\begin{equation*}
    p_{split}\left(\eta\right)=\frac{\gamma}{\left(1+d(\eta)\right)^{\delta}},
\end{equation*} where $d(\eta)$ is the depth of node $\eta$ in the tree. The hyper-parameters $(\gamma, \delta)$ control the shape of trees. The parameter $\gamma>0$ controls the probability that the root of a tree will split into two offspring, while the parameter $\delta>0$ penalizes against deep trees. As noted in~\cite{chipman2010bart}, we want to keep the depth of the tree small whilst ensuring non-trivial trees. In our simulation study on real and synthetic data, we fix $\gamma=0.95$ and $\delta=2$. 

If the node splits, we perform a multivariate split with probability $p_m$. Otherwise, we perform a univariate split. To perform a univariate split of $D_{\eta}^{*}$, we select uniformly one of the unstructured features, followed by a uniform selection from the available split values associated with that feature. We set $p_m=d_{\mathcal{M}}/(d_{\mathcal{M}}+p)$ where $d_{\mathcal{M}}$ is the dimensionality of the structured feature space $\mathcal{M}$ and $p$ is the number of unstructured features.


\subsection{Connection between GP and the SBAMDT model}

In this subsection, we explore the connection between SBAMDT and Gaussian processes (GPs) to reveal the behavior of decision tree models in capturing functional dependence and smoothness. As previously described, BART represents the latent regression function $f$ as a summation of $m$ decision tree piecewise constant functions. In this framework, the value produced by each tree is determined through a probability-weighted average of leaf weights, which depends on the decisions made at the internal nodes:
\[
f(\mathbf{d}) = \sum_{h=1}^m g(\mathbf{d}; T^{(h)}, \mathbf{M}^{(h)}) = \sum_{h=1}^{m} \sum_{l=1}^{L_h} \mu_{hl} \phi_{hl}(\mathbf{d}),
\]

where \( \phi_{hl}(\mathbf{d}) \) is defined as:

\[
\phi_{hl}(\mathbf{d}) | \mathbf{A}^{(h)} = \prod_{\eta \in P_{hl}(\mathbf{d})} P_{\eta}(\mathbf{d}; A_{\eta}^{(h)}),
\]

with the conditional probability given by:

\[
P_{\eta}(\mathbf{d}; A_{\eta}^{(h)}) = [z_{\eta L}^{(h)}(\mathbf{d}) | A_{\eta}^{(h)}]^{\mathds{1}(r_{\eta L}^{(h)} = 1)} \left(1 - [z_{\eta L}^{(h)}(\mathbf{d}) | A_{\eta}^{(h)}]\right)^{1 - \mathds{1}(r_{\eta L}^{(h)} = 1)}.
\]

Under the normal prior and posterior distributions for the leaf weight parameters, we present the following theorem, divided into two parts: the prior distribution and the posterior distribution.

\newpage

\begin{theorem}[Connection between SBAMDT and GP]  \renewcommand\labelenumi{(\theenumi)} \label{Th1}
\item \textbf{Prior distribution:}
\begin{enumerate}
\item Conditional on $\bfT$ and $\bfA$, the prior distribution of $f$ is given by $f\sim GP\left(0,C(f_i,f_j)\right)$, where the covariance function is determined by $C(f_i,f_j)=\frac{\beta_{\mu}}{\alpha_{\mu}-1} \sum_{h=1}^m\sum_{l=1}^{L_h} \phi_{hl}(\bfd_i)\phi_{hl}(\bfd_j)$, and $\alpha_{\mu}$ and $\beta_{\mu}$ are the hyperparameters of the Inverse-Gamma prior applied to the variance of the leaf-weight parameter. In the absence of hard splits, the process is mean square differentiable. However, if there is at least one hard split due to discontinuities in the indicator function, we cannot assert that the process is mean square differentiable overall.
\item Conditional on $\bfT$, the prior distribution of $f$ is  $f\sim GP\left(0,C(f_i,f_j)\right)$ with the covariance matrix defined by $C(f_i,f_j)=\frac{\beta_{\mu}}{\alpha_{\mu}-1} \sum_{h=1}^m\sum_{l=1}^{L_h}E\left(\phi_{hl}(\bfd_i)\phi_{hl}(\bfd_j)\right)$, where the expectation is with respect to the distribution of $\bfA$, which holds in the limit as the number of trees approaches infinity. If there is at least one hard split, the process is not mean square  differentiable everywhere.

\end{enumerate}

\item \textbf{Posterior Distribution:}
\begin{enumerate}
\item We denote with $\{\hat{\bfT},\hat{\bfM},\hat{\bfA},\hat{\sigma}^2,\hat{\sigma}_{\mu}^2\}$ the posterior sample of $\{\bfT,\bfM,\bfA,\sigma^2,\sigma_{\mu}^2\}$.\\ Given $\{\hat{\bfT},\hat{\bfM}^{(-h)},\hat{\bfA},\hat{\sigma}^2,\hat{\sigma}_{\mu}^2\}$, the distribution of the latent regression function $f(\bfd)$ is a valid Gaussian process with a mean given by $\hat{\mu}_f^{(h)}(\bfd)= \hat{\mu}^{(h)T}\hat\Phi^{(h)}(\bfd)+\sum_{h'=1,h\neq h'}^m\hat{\bfM}^{(h')T} \hat{\bm{\Phi}}^{(h')}(\bfd)$ and a covariance function given by 
$\hat{C}(f_i,f_j)=\hat{\bf{\Phi}}^{(h)T}(\bfd_i)\Omega^{(h)}\hat{\bf{\Phi}}^{(h)}(\bfd_j)$, where
$\hat{\bm{\Phi}}^{(h)}(\bfd)=\left(\phi_{h1}(\bfd),\ldots,\phi_{hL_h}(\bfd)\right)^T$, where $\hat{\bm\mu}^{(h)}$ and $\Omega^{(h)}$ are the mean and covariance matrix of the conditional distribution of $M^{(h)}$ that is a multivariate Gaussian distribution.

\item We denote with $\{\hat{\bfT},\hat{\bfM},\hat{\bfA},\hat{\sigma}^2,\hat{\sigma}_{\mu}^2\}$ the posterior sample of $\{\bfT,\bfM,\bfA,\sigma^2,\sigma_{\mu}^2\}$. Given $\{\hat{\bfT},\hat{\bfA},\hat{\sigma}^2,\hat{\sigma}_{\mu}^2\}$, the posterior distribution of the latent regression function $f(\bfd)$ is a valid Gaussian process with a mean given by $\hat{\mu}_f(\bfd)=\sum_{h=1}^m E\left(M^{(h)}\right)^{T}\hat{\bm{\Phi}}^{(h)}(\bfd)$ and a covariance function given by\\
$\hat{C}(f_i,f_j)=\sum_{h=1}^m\sum_{h'=1}^m\hat{\bf{\Phi}}^{(h)T}(\bfd_i)C\left(M^{(h)}, M^{(h')}\right)\hat{\bf{\Phi}}^{(h')}(\bfd_j) $, where
$\hat{\bm{\Phi}}^{(h)}(\bfd)=\left(\phi_{h1}(\bfd),\ldots,\phi_{hL_h}(\bfd)\right)^T$.
\end{enumerate}

\end{theorem}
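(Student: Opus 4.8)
The common thread through all four parts is a single observation: once we condition on the tree topologies $\mathbf{T}$ and the decision types $\mathbf{A}$, every basis function $\phi_{hl}(\mathbf{d})$ in \eqref{eq:sumtrees} is a fixed, deterministic function of $\mathbf{d}$, so that $f(\mathbf{d})=\sum_{h,l}\mu_{hl}\phi_{hl}(\mathbf{d})$ is a \emph{linear functional of the leaf-weight vector}. Hence, whenever the collection $\{\mu_{hl}\}$ is (jointly) Gaussian, $f$ is an affine image of a Gaussian vector and is therefore a Gaussian process; its mean and covariance follow from linearity of expectation and bilinearity of covariance. The plan is to establish the Gaussianity of the relevant leaf-weight law in each of the four settings and then read off the moments.

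For the \textbf{prior}, part (1): conditioning additionally on $\sigma_\mu^2$, the weights are i.i.d.\ $N(0,\sigma_\mu^2)$, so $E[\mu_{hl}]=0$ gives mean zero, and independence across $(h,l)$ collapses the double covariance sum to $\mathrm{Cov}(f_i,f_j)=\sigma_\mu^2\sum_{h,l}\phi_{hl}(\mathbf{d}_i)\phi_{hl}(\mathbf{d}_j)$; averaging over the Inverse-Gamma prior replaces $\sigma_\mu^2$ by $E[\sigma_\mu^2]=\beta_\mu/(\alpha_\mu-1)$, the stated factor. Mean-square differentiability is then decided by the smoothness of $C$: using the criterion that a zero-mean GP is m.s.\ differentiable at $\mathbf{d}$ iff the mixed derivative $\partial^2 C(\mathbf{d}_i,\mathbf{d}_j)/\partial\mathbf{d}_i\,\partial\mathbf{d}_j$ exists there, one checks that under all-soft decisions each $\phi_{hl}$ is built from smooth logistic gates and is differentiable, so $C$ is smooth and $f$ is m.s.\ differentiable, whereas a single hard split injects an indicator that makes $\phi_{hl}$ (hence $C$) discontinuous and destroys m.s.\ differentiability at the split boundary. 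Part (2) marginalizes $\mathbf{A}$: the per-tree contributions $g_h$ are independent, mean zero, and each is a finite mixture (over the categorical decisions) of Gaussians, so the tower property gives $\mathrm{Cov}(g_h(\mathbf{d}_i),g_h(\mathbf{d}_j))=E[\sigma_\mu^2]\,E_{\mathbf{A}}[\sum_l\phi_{hl}(\mathbf{d}_i)\phi_{hl}(\mathbf{d}_j)]$, and summing over $h$ yields the claimed covariance. Here Gaussianity is only asymptotic: because each $g_h$ is a non-Gaussian mixture, one invokes a multivariate central limit theorem for the independent array $\{g_h\}_{h=1}^m$, where the BART scaling $\beta_\mu\propto\mathrm{Var}(y)/m$ keeps $\sum_h g_h$ nondegenerate; verifying a Lindeberg/Lyapunov condition shows the finite-dimensional laws converge to a multivariate normal, i.e.\ a GP in the limit $m\to\infty$. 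As in part (1), the positive mixture weight on any hard split keeps a discontinuity in $E_{\mathbf{A}}[\phi\phi]$, so m.s.\ differentiability fails wherever a hard split can occur.

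For the \textbf{posterior}, part (1) is the back-fitting full conditional: fixing $\mathbf{T},\mathbf{A}$ and the other trees' weights $\hat{\mathbf{M}}^{(-h)}$, the residual $R^{(h)}=Y-\sum_{h'\neq h}g_{h'}$ obeys the Gaussian linear model $R^{(h)}=\Phi^{(h)}\mathbf{M}^{(h)}+\varepsilon$ with conjugate prior $\mathbf{M}^{(h)}\sim N(0,\sigma_\mu^2 I)$, giving the Gaussian full conditional $\mathbf{M}^{(h)}\sim N(\hat{\boldsymbol\mu}^{(h)},\Omega^{(h)})$. Since $f(\mathbf{d})=\mathbf{M}^{(h)T}\hat\Phi^{(h)}(\mathbf{d})+\sum_{h'\neq h}\hat{\mathbf{M}}^{(h')T}\hat{\boldsymbol\Phi}^{(h')}(\mathbf{d})$ is affine in the only random object $\mathbf{M}^{(h)}$, it is a GP with the stated mean and covariance $\hat{\boldsymbol\Phi}^{(h)T}(\mathbf{d}_i)\Omega^{(h)}\hat{\boldsymbol\Phi}^{(h)}(\mathbf{d}_j)$ (the frozen trees contribute only to the mean). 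Part (2) integrates out \emph{all} leaf weights at once: stacking $\Phi=[\Phi^{(1)}\,|\cdots|\,\Phi^{(m)}]$ turns the model into a single Bayesian linear regression $Y=\Phi\mathbf{M}+\varepsilon$ with $\mathbf{M}\sim N(0,\sigma_\mu^2 I)$, whose posterior on $\mathbf{M}$ is again jointly Gaussian; crucially, conditioning on $Y$ induces nonzero cross-tree posterior covariances $C(M^{(h)},M^{(h')})$. Writing $f$ as the affine functional $\sum_h M^{(h)T}\hat{\boldsymbol\Phi}^{(h)}(\mathbf{d})$ of this Gaussian vector gives a GP with mean $\sum_h E(M^{(h)})^T\hat{\boldsymbol\Phi}^{(h)}(\mathbf{d})$ and the double-sum covariance displayed in the statement.

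The main obstacle is the limiting Gaussianity in prior part (2): unlike the other three cases, conditioning does not by itself make $f$ Gaussian, so the GP conclusion genuinely requires a CLT for the independent but non-identically-distributed tree array, together with the $1/m$ variance scaling of the leaf prior, and care that the Lindeberg condition holds uniformly in $\mathbf{d}$. Two smaller points also deserve attention: strictly, the prior parts should be stated conditionally on $\sigma_\mu^2$ (marginalizing it produces a Student-$t$, not Gaussian, law, which is why the covariance carries $E[\sigma_\mu^2]$ rather than $\sigma_\mu^2$ itself), and the m.s.-differentiability argument should acknowledge that the distance-to-nearest-knot map is only piecewise smooth, with kinks on Voronoi boundaries, so the smooth-covariance claim under soft splits holds away from that measure-zero set.
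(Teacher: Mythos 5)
Your proposal is correct and, for the three parts the paper actually proves, follows essentially the same strategy: conditionally on the trees and decision types, $f$ is an affine functional of the jointly Gaussian leaf weights, so the GP claims reduce to computing first and second moments, and the prior claim conditional on $\mathbf{T}$ alone is obtained by a CLT over independent trees. The genuine differences are worth recording. First, for mean-square differentiability the paper argues on sample paths: it explicitly differentiates the basis functions $\phi_{hl}$, obtaining smooth logistic-gate derivatives for soft splits and Dirac-delta terms for hard splits, whereas you invoke the standard mixed-second-derivative criterion on the covariance kernel; both are legitimate, yours being the more standard GP argument, while the paper's computation produces the explicit gradient formulas it displays in the appendix. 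Second, the paper's appendix proves only prior (1), prior (2), and posterior (1); your stacked-regression argument for posterior (2) --- joint Gaussianity of all leaf weights under the posterior, with nonzero cross-tree covariances $C\left(M^{(h)},M^{(h')}\right)$ induced by conditioning on $Y$ --- supplies a proof the paper omits, and it is the right one. Third, you are more careful than the paper on points it glosses over: the need for a Lindeberg/Lyapunov condition (the tree contributions are independent but not identically distributed, and the GP limit requires convergence of all finite-dimensional laws, not a single marginal), and the fact that marginalizing $\sigma_{\mu}^2$ makes the leaf weights Student-$t$ rather than Gaussian, so the prior statements should be read conditionally on $\sigma_{\mu}^2$ with only the second moments averaged --- the paper silently substitutes $\mathrm{Var}(\mu_{hl})=\beta_{\mu}/(\alpha_{\mu}-1)$ without comment. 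Your remark that the distance-to-nearest-knot maps have kinks on Voronoi boundaries, so the soft-split covariance is smooth only away from that set, is likewise a real caveat that the paper's own derivative formulas (which contain absolute values and indicator functions) do not address.
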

\begin{proof}
Appendix~\ref{Theorem1} includes the proof of Theorem~\ref{Th1}.
\end{proof} 

Building on Theorem~\ref{Th1}, we further explore a specific hierarchical prior construction of decision trees based on the number of leaves, tree structures, and decision rules.
The following proposition outlines the behavior of the resulting stochastic process of the latent regression function.  

\begin{proposition}\label{Prop1}
Consider a hierarchical prior in which a shifted Poisson distribution is placed on the number of leaves, $P(L_h=k)=\frac{e^{-\lambda}\lambda^{k-1}}{(k-1)!},\ k=1,2,3,\ldots$, and conditional on $L_h$, a uniform distribution on the tree structures with $L_h$ leaves. We denote with $R^{(h)}$ the set of decision rules in $T^{(h)}$. Conditional on $T^{(h)}$ and $L_h$, the distribution of $R^{(h)}$ is given by the product rule probabilities,  $P(R^{(h)}|T^{(h)},L_h)=\prod_{\eta \in T^{(h)}}p_{rule}(\eta)$, where $p_{rule}(\eta)$ is the probability of the decision rule for node $\eta$ . Then, $f\sim GP(0, C\left(f_i,f_j\right))$ with \\$C\left(f_i,f_j\right)=
\frac{\beta_{\mu}}{\alpha_{\mu}-1} \sum_{h=1}^m \sum_{k=1}^{\infty} P\left(L_h=k\right)P\left(T^{(h)}|L_{h}=k\right)P(R^{(h)}|T^{(h)})\sum_{l=1}^{k}E\left(\phi_{hl}(\bfd_i)\phi_{hl}(\bfd_j)\right)$ as $m\rightarrow \infty$.    
\end{proposition}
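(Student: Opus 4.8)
The plan is to prove Proposition~\ref{Prop1} by reducing it to the infinite-tree Gaussian-process limit already established in part~(2) of Theorem~\ref{Th1}, and then carrying out the additional marginalization over the hierarchical tree prior explicitly. The starting point is the sum-of-trees representation $f(\bfd)=\sum_{h=1}^m g(\bfd;T^{(h)},\bfM^{(h)})$ from \eqref{eq:sumtrees}, where under the prior the tree triples $(T^{(h)},\bfA^{(h)},\bfM^{(h)})$ are mutually independent and, now that the structure itself is drawn from the specified hierarchical prior (shifted Poisson on $L_h$, uniform on structures with $L_h$ leaves, product rule probabilities on $R^{(h)}$), identically distributed across $h$. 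Since the leaf parameters satisfy $E(\mu_{hl})=0$, each summand is a mean-zero random function, so I would first record that $E(f(\bfd))=0$.

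First I would compute the covariance exactly, for finite $m$, by the tower property. Conditioning on $(T^{(h)},\bfA^{(h)},\sigma_\mu^2)$ and using that the leaf weights are i.i.d.\ $N(0,\sigma_\mu^2)$ within a tree, the cross-leaf terms vanish, while the cross-tree terms vanish by independence and mean-zeroness, leaving
\[
C(f_i,f_j)=\sum_{h=1}^m E(\sigma_\mu^2)\, E_{T^{(h)},\bfA^{(h)}}\!\Big(\sum_{l=1}^{L_h}\phi_{hl}(\bfd_i)\phi_{hl}(\bfd_j)\Big),
\]
where $\sigma_\mu^2$ is independent of the tree components so that $E(\sigma_\mu^2)=\beta_\mu/(\alpha_\mu-1)$ factors out. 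I would then expand $E_{T^{(h)}}$ against the hierarchical prior, writing the expectation over $T^{(h)}$ as $\sum_{k=1}^\infty P(L_h=k)$ times the average over structures with $k$ leaves and over rule configurations; this reproduces exactly the claimed expression involving $P(L_h=k)P(T^{(h)}\mid L_h=k)P(R^{(h)}\mid T^{(h)})$ and the inner sum $\sum_{l=1}^k E(\phi_{hl}(\bfd_i)\phi_{hl}(\bfd_j))$, the remaining expectation being over $\bfA$. Convergence of the series in $k$ is immediate because the leaf-assignment probabilities form a partition of unity, $\sum_{l=1}^{L_h}\phi_{hl}(\bfd)=1$, so by Cauchy--Schwarz the inner sum is bounded by $1$ uniformly in $k$ and the tail is controlled by $\sum_k P(L_h=k)=1$.

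Next I would establish the Gaussian-process limit. Because the $m$ tree contributions are i.i.d.\ mean-zero random functions whose leaf-variance scale $\beta_\mu=0.5\,\mathrm{Var}(y)/m$ is of order $1/m$, each finite collection $(f(\bfd_1),\dots,f(\bfd_n))$ is a sum of $m$ independent mean-zero vectors each of order $1/\sqrt m$; applying the multivariate Lindeberg--Feller central limit theorem for triangular arrays, exactly as in the proof of Theorem~\ref{Th1}(2), yields joint asymptotic normality as $m\to\infty$, and hence the stated $GP$ with the covariance computed above. The Lindeberg condition is verified from the uniform bound $\phi_{hl}\in[0,1]$ with $\sum_l\phi_{hl}=1$, together with the finite moments of the leaf weights and the light (Poisson) tail of $L_h$, which ensure no single tree dominates.

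The step I expect to be the main obstacle is the rigorous passage to the Gaussian limit, not the algebra of the covariance. Two points need care: the leaf-variance $\sigma_\mu^2$ is a single parameter shared by all trees, so conditionally on $\sigma_\mu^2$ the summands are independent but unconditionally they are only exchangeable (strictly, the unconditional law is a scale mixture); and the $m\to\infty$ limit must be interchanged with the infinite sum over the number of leaves $k$. I would handle the former by conditioning on $\sigma_\mu^2$ (or, as in Theorem~\ref{Th1}, plugging in its prior mean $\beta_\mu/(\alpha_\mu-1)$) before applying the CLT, and the latter by dominated convergence, using the uniform bound on the inner sum noted above. The remaining details follow the same lines as Theorem~\ref{Th1} and require no new ideas.
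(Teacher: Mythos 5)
Your proposal is correct and takes essentially the same route as the paper: the paper offers no separate proof of Proposition~\ref{Prop1}, treating it as the prior part (2) of Theorem~\ref{Th1} (mean-zero leaf weights, covariance via the tower property with $\mathrm{Var}(\mu_{hl})=\beta_{\mu}/(\alpha_{\mu}-1)$, then a CLT as $m\to\infty$) combined with an additional marginalization of the tree expectation over $P(L_h=k)P(T^{(h)}\mid L_h=k)P(R^{(h)}\mid T^{(h)})$, which is exactly your decomposition. If anything, your write-up is more careful than the paper on precisely the points you flag --- the $1/m$ scaling of $\beta_{\mu}$ that makes the triangular-array CLT work, the dominated-convergence interchange of $m\to\infty$ with the sum over $k$, and the fact that the shared $\sigma_{\mu}^2$ strictly yields a scale mixture of Gaussians unless one conditions on it or plugs in its prior mean, which is what the paper implicitly does.
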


With the foundation established in Theorem~\ref{Th1} and the subsequent Proposition~\ref{Prop1}, we can now contrast our SBAMDT model with the XBART-GP strategy developed by \citet{wang2024local}. While XBART-GP fits a Gaussian process at each leaf node of a BART tree and utilizes a squared exponential kernel to define the covariance function achieve a smooth stochastic process for prediction, our SBAMDT model naturally provides a globally valid stochastic process and hence a coherent framework for model estimation and prediction. SBAMDT does not require strong parametric assumptions on the covariance matrix. Instead, it quantifies the relative similarity between data points by multiplying their probabilities of falling into the same leaf and summing these products across all tree structures. The posterior covariance matrix of $f$ involves the posterior of decision trees, making it nonstationary and highly adaptive. This allows SBAMDT to capture the complex relationships in the data more effectively than the XBART-GP method, which relies on predefined distance metrics.

\section{Inference algorithm}
\label{sec:inference}
We summarize our Bayesian hierarchical model below 
\begin{align*}
    Y &=f(\bfd)+\epsilon,\ 
\epsilon \overset{\text{iid}}\sim \mbox{N}\left(0,\sigma^2\right),\ \bfd=\left(\bfs,\mathbf{x}\right)  \\
f(\bfd)&=\sum_{h=1}^m\sum_{l=1}^{L_h}\mu_{hl}\phi_{hl}(\bfd) \\
\phi_{hl}(\bfd)|\bfA^{(h)} &=\prod_{\eta\in P_{hl}(\bfd)} [z_{\eta L}^{(h)}(\bfd)| A_{\eta}^{(h)}]
^{\mathds{1}\left(r_{\eta L}^{(h)}=1\right)}\big(1-[z_{\eta L}^{(h)}(\bfd)| A_{\eta}^{(h)}]\big)^{1-\mathds{1}\left(r_{\eta L}^{(h)}=1\right)}\\
\mu_{hl}| (T^{(h)},\sigma^2_{\mu}) &\overset{\text{iid}}\sim \mbox{N}\left(0,\sigma_{\mu}^2 \right),\quad 
\sigma^2_{\mu}\sim \mbox{Inv-Gamma}(\alpha_\mu,\beta_\mu), \quad 
\sigma^2  \sim \frac{v\lambda}{\chi_v^2}, \\
T^{(h)} &\sim \text{Galton-Watson process } \\
\text{\textbf{Approach 1:}} \\
A_{\eta}^{(h)}|\bfp_A &\sim \mbox{Multinomial}(\bfp_A),\ \bfp_A=\left(p_0, p_2,\cdots,p_{k}\right),\ \sum_{i=1}^{k}p_i=1-p_0\\
\bfp_A&\sim \mbox{Dirichlet}(\pmb{\psi}),\ \pmb{\psi}=\left(\psi_0,\cdots, \psi_{k} \right),\ \psi_0,\cdots,\psi_{k} >0.  \\
\text{\textbf{Approach 2:}} \\
a^{(h)} &\sim \mbox{Gamma}\left(a_g,\beta_g \right) \\
A_{\eta}^{(h)}|p_A &= \begin{cases}
1\ (\text{hard}) & \text {wp } p_A \\
0\ (\text{soft})& \text {wp } 1-p_A
\end{cases}\\
p_A&\sim \mbox{Beta}(s_a,s_b).
\end{align*} 
 We describe the Bayesian inference algorithm for Sk-BAMDT and defer the detailed algorithm for S2-BAMDT to Appendix~\ref{sec::S2BAMDT}. Given the observed data $\bfY=\{Y_i\}_{i=1}^n$, we seek to draw posterior samples of the model parameters $\left(\bfT,\bfM,\bfA,\bfp_A,\sigma^2,\sigma_{\mu}^2\right)$ from their posterior distributions. 

For any arbitrary tree $T^{(h)}$, let us denote the set of the remaining trees as $T^{(-h)}=\{T_j\}_{j \neq h}$ whose associated leaf parameters are denoted as $\bfM^{(-h)}$ and node probabilistic decisions are denoted as $\bfA^{(-h)}$. Let $\bfR^{(h)}=\bfY-\sum_{j\neq h}g(\bfd; T^{(j)}, \bfM^{(j)})$ denote the residual response data conditional on other decision trees.  
We follow the backfitting Markov chain Monte Carlo (MCMC) sampler \citep{hastie2000bayesian} to draw each decision tree parameters $(T^{(h)},\bfM^{(h)},\bfA^{(h)})$ from their respective full conditional distributions of global model parameters $(\bfp_A,\sigma^2,\sigma_{\mu}^2)$ and other local decision tree specific parameters, $(T^{(-h)},\bfM^{(-h)},\bfA^{(-h)})$ as shown in Algorithm~\ref{alg:MH}. 


A draw from $T^{(h)},\bfM^{(h)},\bfA^{(h)}|T^{(-h)}, \bfM^{(-h)}, \bfA^{(-h)}, \sigma^2, \sigma_{\mu}^2,\bfp_A, Y$ is equivalent to a draw from $T^{(h)},\bfM^{(h)},\bfA^{(h)}|\bfR^{(h)},\sigma^2,\sigma_{\mu}^2,\bfp_A$. 
A draw from $T^{(h)},\bfM^{(h)},\bfA^{(h)}|\bfR^{(h)},\sigma^2,\sigma_{\mu}^2,\bfp_A$ is equivalent to a draw from $T^{(h)}|\bfR^{(h)},\sigma^2,\sigma_{\mu}^2,\bfp_A,Y$ followed by a draw from $\bfA^{(h)}|T^{(h)},\bfR^{(h)},\sigma^2,\sigma_{\mu}^2,\bfp_A$ and a draw from $\bfM^{(h)}|T^{(h)},\bfA^{(h)},\bfR^{(h)},\sigma^2,\sigma_{\mu}^2$.  

We propose a Metropolis-Hastings Algorithm to sample from 
\begin{equation*}
    P\left(T^{(h)}|\bfR^{(h)},\bfA^{(h)},\sigma^2,\sigma^2_{\mu}\right) \propto P\left(\bfR^{(h)}|T^{(h)},\bfA^{(h)},\sigma^2,\sigma^2_{\mu}\right)P(T^{(h)}).
\end{equation*}
The transition kernel is chosen among the three proposals: GROW, PRUNE, and CHANGE. The GROW proposal randomly picks a terminal node, splits the chosen terminal into two new nodes, and assigns a decision rule to it. The PRUNE proposal randomly picks a parent of two terminal nodes and turns it into a terminal node by collapsing the nodes below it. The CHANGE proposal randomly picks an internal node and randomly reassigns to it a probabilistic decision. We describe the implementation of the proposals in Appendix~\ref{Ap::MH_Trees}. In our simulation study, the probabilities of the proposals are set to: P(GROW) = P(PRUNE) = 0.4, and P(CHANGE) = 0.2.

Then, we have the following results required for the implementation of Algorithm~\ref{alg:MH}.
\begin{enumerate}
    \item The conditional likelihood is given by 
    \begin{equation}\label{eq:Lik}
    P(\bfR^{(h)}|\bfM^{(h)},T^{(h)},\bfA^{(h)},\sigma^2)=\prod_{i=1}^n (2\pi\sigma^2)^{-1/2} \exp\left(-\frac{1}{2\sigma^2}\left(R_i^{(h)}-\sum_{l=1}^{L_h}\mu_{hl}\phi_{hl}(\bfd_i)\right)^2\right).
    \end{equation}
    \item The conditional integrated likelihood is given by 
    \begin{equation} \label{eq:IntLik}
        P\left(\bfR^{(h)}|T^{(h)},\bfA^{(h)},\sigma^2,\sigma_{\mu}^2\right)=\frac{|2\pi\Omega|^{1/2}}{(2\pi\sigma^2)^{n/2}|2\pi\sigma^2I|^{1/2}}\exp\left(-\frac{||\bfR^{(h)}||^2}{2\sigma^2}+\frac{1}{2}\hat{\mu}^T\Omega^{-1}\hat{\mu}\right),
    \end{equation} where 
    \begin{align*}
        \hat{\mu}&=\Omega \sum_{i=1}^n\frac{R_i^{(h)}\bm{\Phi}_i}{\sigma^2},\ \Omega^{-1}=\left(\frac{I}{\sigma_{\mu}^2}+\Lambda\right),\
        \Lambda =\frac{1}{\sigma^2}\sum_{i=1}^n\bm{\Phi}_i\bm{\Phi}_i^T,\
        \bm{\Phi}_i^T=\left(\phi_{h1}(\bfd_i),\cdots,\phi_{hL_h}(\bfd_i)\right),
    \end{align*} $I$ the $L_h\times L_h$ identity matrix, $|U|$ the determinant of the matrix $U$ and $||z||$ the Euclidean norm of vector z.
    \item The conditional distribution of $\bfM^{(h)}$ is a multivariate Gaussian distribution with mean $\hat{\bm\mu}^{(h)}$ and covariance matrix $\Omega^{(h)}$:
    \begin{equation*}
        \bfM^{(h)}|\bfR^{(h)}, T^{(h)},\bfA^{(h)},\sigma^2,\sigma_{\mu}^2 \sim \mbox{N}\left(\hat{\bm\mu}^{(h)},\Omega^{(h)}\right).
    \end{equation*}
    \item The conditional distribution of $\sigma^2$ is an Inverse-gamma distribution with shape $s_{1\sigma}=\frac{n+v}{2}$ and scale $s_{2\sigma}=\frac{1}{2}\sum_{i=1}^n \left( Y_i -\sum_{h=1}^m\sum_{l=1}^{L_h}\mu_{hl}\phi_{hl}(\bfd_i) \right)^2 + \frac{v\lambda}{2}$.

    \item The conditional distribution of $\sigma_{\mu}^2$ is an Inverse-gamma distribution with shape $s_{1\mu}=\alpha_{\mu} + \frac{1}{2}\sum_{h=1}^m L_h$ and scale $s_{2\mu}=\frac{1}{2}\sum_{h=1}^m\sum_{l=1}^{L_h} \mu_{hl}^2 + \beta_{\mu}$. 
    \item The conditional distribution of $\bfp_A$ is a Dirichlet distribution:
     \begin{equation*}
         P(\bfp_A|\bfT,\bfA) \sim \mbox{Dirichlet}(\tilde{\psi}),\ 
        \tilde{\pmb{\psi}}=(\tilde{\psi}_0,\cdots,\tilde{\psi}_{k}),\ 
        \tilde{\psi}_l = \sum_{h} \sum_{\eta} \mathds{1}\left(A_{\eta}^{(h)}=l\right) +\psi_l. 
     \end{equation*}
    \item The conditional distribution of $A_{\eta}^{(h)}$ is a Multinomial distribution : 
    \begin{align*}
        A_{\eta}^{(h)}|T^{(h)},\bfp_A,\bfR^{(h)},\sigma^2,\sigma_{\mu}^2 &\sim \mbox{Multinomial} (w_{Ah}),\  
        w_{Ah}=(w_{h0},\cdots,w_{h{k}}) \\
        w_{hl} &= \frac{p_l P\left(\bfR^{(h)}|T^{(h)},A_{\eta}^{(h)}=l,\sigma^2,\sigma_{\mu}^2\right)}{\sum_{j=0}^kp_j P\left(\bfR^{(h)}|T^{(h)},A_{\eta}^{(h)}=j,\sigma^2,\sigma_{\mu}^2\right)}.
    \end{align*}

\end{enumerate}
The proof can be found in the appendix.

\begin{algorithm}[H] 
\caption{Metropolis-Hastings within Gibbs sampler (Sk-BAMDT)} 
\begin{algorithmic}
\label{alg:MH}
\FOR{$t=1,2,3,..$ }
\FOR{$h=1$ to $m$ }
\STATE{Sample $T^{(h)}|\bfR^{(h)},\sigma^{2},\sigma_{\mu}^{2},\bfp_A$ using a Metropolis-Hastings Algorithm.}
\STATE{If $T^{(h)}$ grows, sample the probabilistic decision for the new internal node $\eta$, $A_{\eta}^{(h)}|\bfR^{(h)},T^{(h)},\sigma^2,\sigma_{\mu}^2,\bfp_A$, from $\mbox{Multinomial}(w_{Ah})$.}
\STATE{Sample $ \bfM^{(h)}|\bfR^{(h)}, T^{(h)},\bfA^{(h)},\sigma^2,\sigma_{\mu}^2$ from $\mbox{N}\left(\hat{\pmb\mu}^{(h)},\Omega^{(h)}\right)$. }
\ENDFOR
\STATE{Sample $\sigma^{2}|\bfT,\bfM,\bfA,\bfY$ from $\mbox{Inverse-Gamma}(s_{1\sigma},s_{2\sigma})$. }
\STATE{Sample $\sigma^{2}_{\mu}|\bfT,\bfM,\bfA,\bfY$ from $\mbox{Inverse-Gamma}(s_{1\mu},s_{2\mu})$. }
\STATE{Sample $\bfp_A|\bfT,\bfA$ from $\mbox{Dirichlet}(\tilde{\pmb{\psi}})$.}
\ENDFOR
\end{algorithmic}
\end{algorithm}

\section{Simulation study on synthetic data}
\label{sec:SD}
Using synthetic data, we show the effectiveness of our proposed SBAMDT models, namely Sk-BAMDT and S2-BAMDT.  In addition to SBAMDT, we consider three benchmark additive decision tree models for comparisons, including BART~\citep{chipman2010bart}, SBART~\citep{linero2018bayesian}, and BAMDT~\citep{luo2022bamdt}. 
We present two simulation scenarios in this section. The first has a rotated U-shaped domain with circular boundaries, where SBAMDT and BAMDT are particularly suitable. The second has a square domain with horizontal and vertical boundaries to examine the robustness of our method in a case that favors the competing methods, BART and SBART.  See Appendix 2 for an additional simulation example where the true function is a piecewise Gaussian process on the U-shaped domain.

For each model, we employ $m = 30$ weak learners. 
For each unstructured feature, we utilize 100 equally spaced grid points as candidate split cutoff values. We discard the first 5000 iterations as burn-in and apply thinning by retaining one sample every five iterations from the remaining 15000 iterations of the MCMC algorithms in SBAMDT, BAMDT and BART. For SBART, we discard the first 8000 iterations as burn-in and save 3000 samples after thinning. 
 
We evaluate the prediction performance of SBAMDT and its competitors in terms of root mean square prediction error (RMSPE), mean absolute prediction error (MAPE), and continuous ranked probability score \citep[CRPS,][]{gneiting2007strictly}. For all metrics, lower values indicate better performance. 

\subsection{U-shape Example}
We consider a 45-degree rotation of a U-shaped domain using longitude and latitude as structural features. A 0.9-radius circle centred at the origin divides the U-shaped domain into three clusters. We generate ten unstructured features where each feature is an independent realisation from a Gaussian process on the U-shape domain. We use a training data set of size $n = 500$, a test data set of size $n_{test} = 200$, and a piecewise smooth function dependent on $(\bfs,x_1)$ and their interaction. See Figure \ref{Fig::ScU} for an illustration of the true function and the domain. With a noise level of $\sigma = 0.1$, we generate 50 response replicates in the training and test data from Equation~\eqref{eq:model}.  A random subset of 100 training data is employed as knots for BAMDT and SBAMDT. 


\begin{figure}[h]   

    \centering\includegraphics[width=15cm,height=8cm]{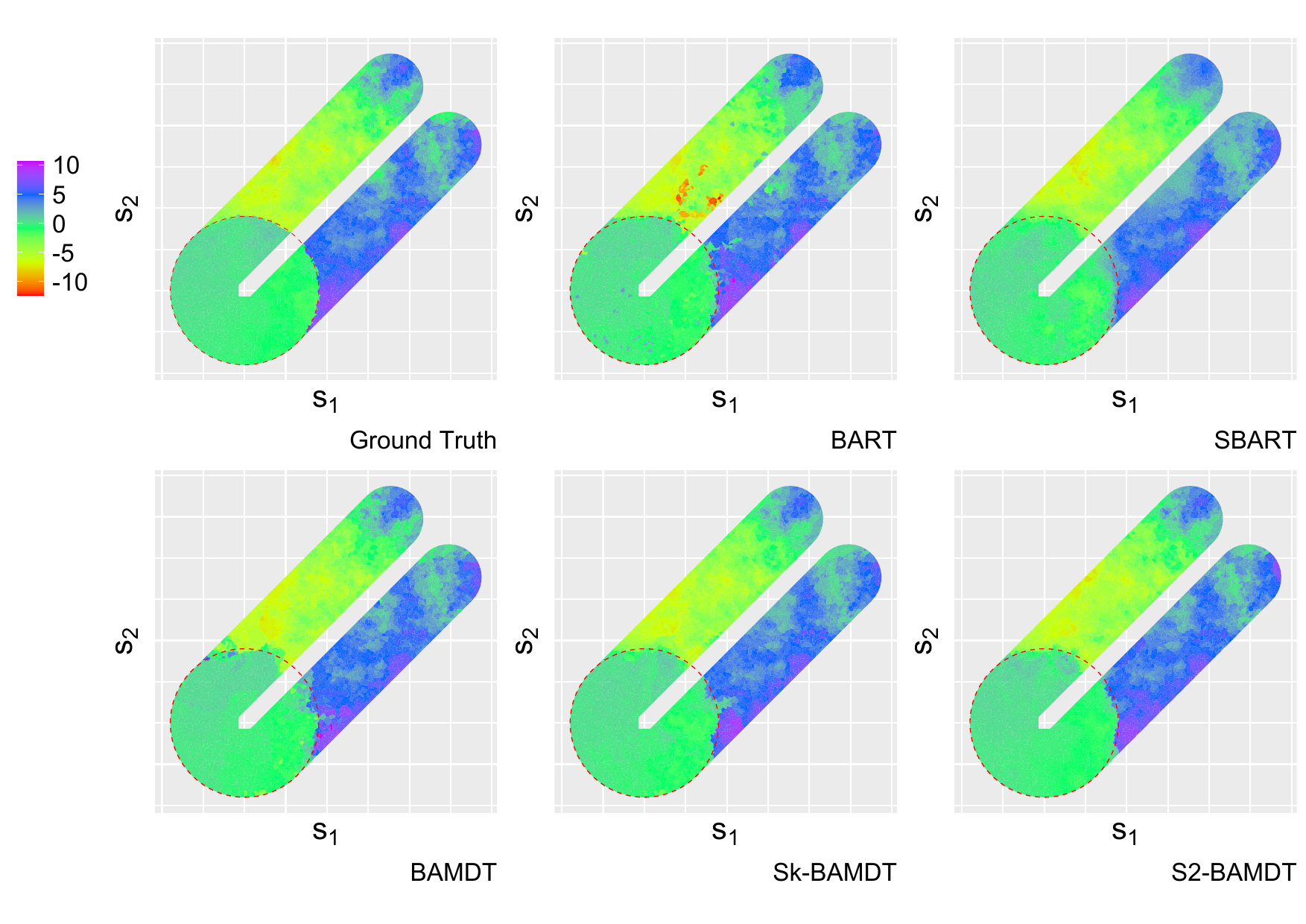}
  \caption{ The ground truth for $f(\bfs,\mathbf{x})$ and the predictive surfaces $\hat{f}(\bfs,\mathbf{x})$ of each method for a U-shape simulated data. The red circle indicates discontinuity boundaries in the true function projected to the 2-D U-shape domain.} \label{Fig::ScU}
\end{figure}

\begin{table}[H]
 \caption{\textbf{U-shape.} The average performance metrics and their standard deviations (in parenthesis) for SBAMDT and benchmark models over 50 replicates. 
 } 
 \label{tab:PM_ScU} 
\begin{tabular}{p{2.5 cm} p{1.8 cm} p{1.8 cm} p{1.8cm}  p{2.5 cm} p{2.5 cm}   }
 \hline
 \multicolumn{6}{c}{ SBAMDT and Benchmark models}  \\
 \hline
   & BAMDT  &BART  & SBART & Sk-BAMDT (q=8) & S2-BAMDT (q=8)  \\ \hline
MAPE$\times10^{-1}$& 4.88(0.53)  & 7.09(0.48) & 6.36(0.29)  & \textbf{3.93(0.43)} & 4.13(0.72)  \\ 
RMSPE$\times10^{-1}$& 10.01(1.29)& 12.52(1.00) & 10.18(0.38)  & \textbf{9.14(0.77)}& 9.73(1.22)   \\
CRPS$\times10^{-1}$& 4.00(0.50)& 5.85(0.45)& 4.70(0.19)& \textbf{3.03(0.32)} & 3.24(0.47)  \\
 \hline
\end{tabular}
\end{table}

\begin{figure}[h]   
    \centering\includegraphics[width=15cm,height=8cm]{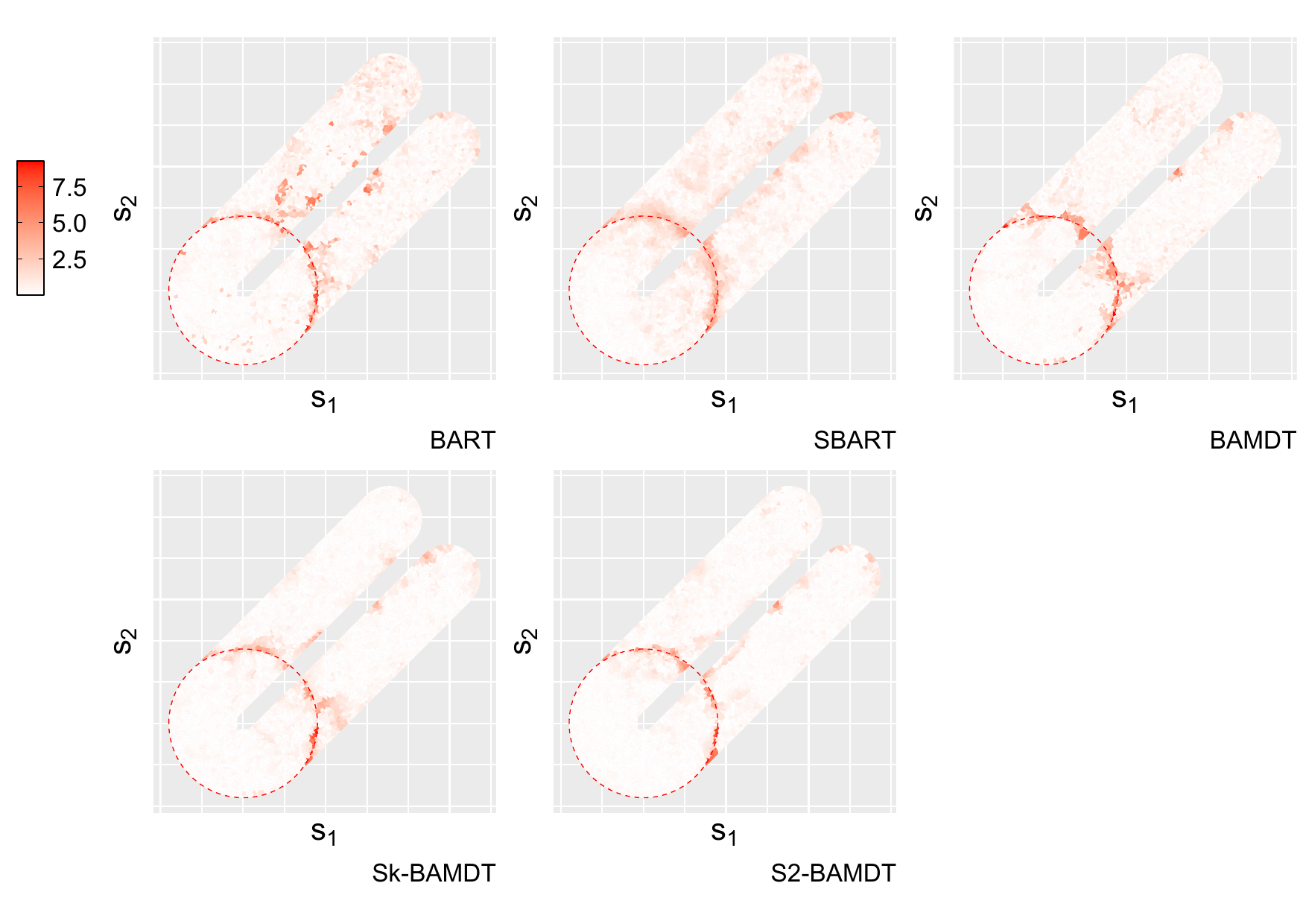}
  \caption{ The APE of each method for one U-shape simulation.} \label{Fig::MAPE_ScU}
\end{figure}

\begin{figure}[h]   
    \centering\includegraphics[width=15cm]{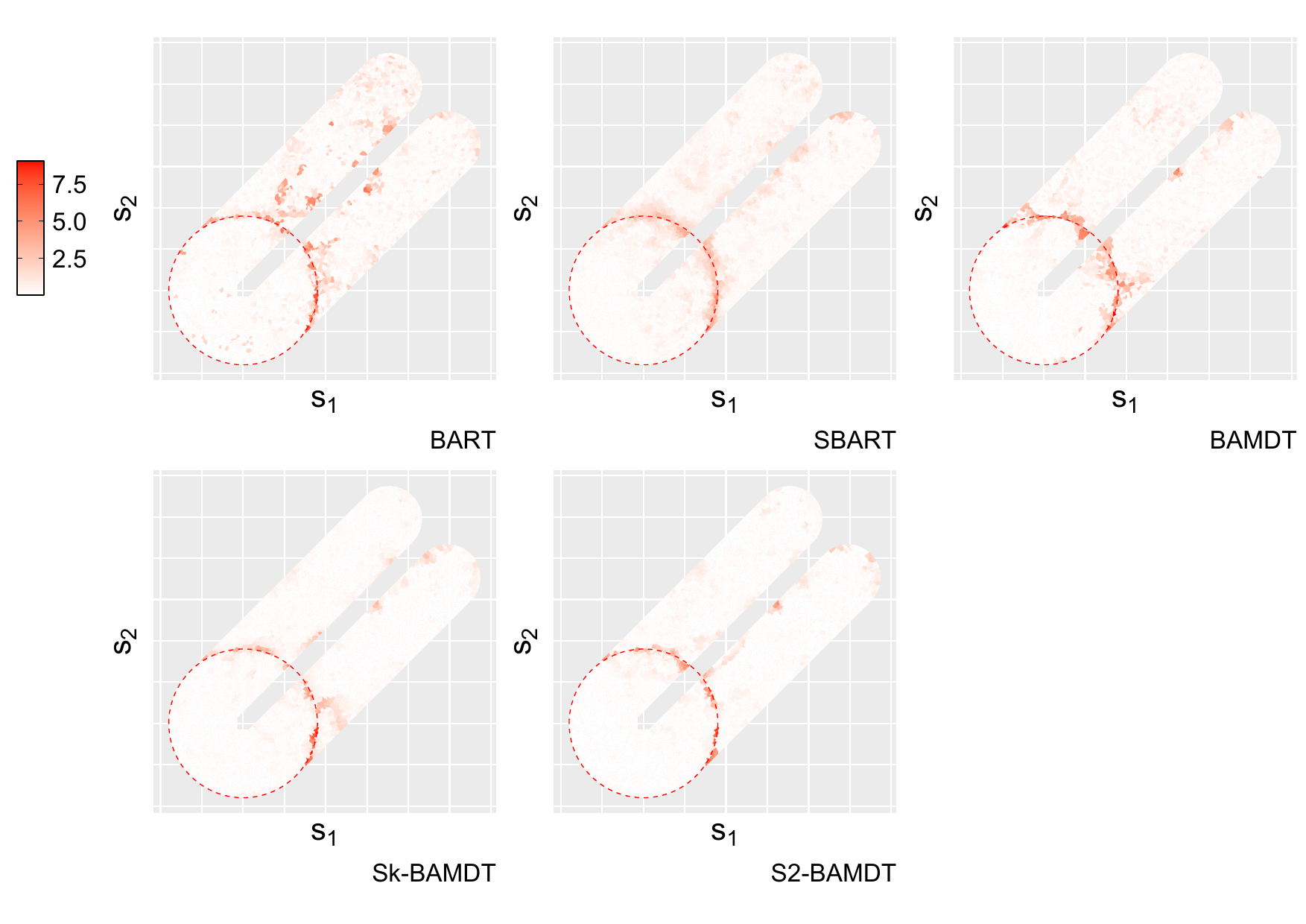}
  \caption{ The CRPS of SBAMDT, BAMDT, BART and SBART associated with U-shape Example.} \label{Fig::crps_ScU}
\end{figure}

Table~\ref{tab:PM_ScU} summarizes the average performance metrics of the benchmark models and SBAMDT over 50 replicates, demonstrating that Sk-BAMDT performs the best among these methods, followed by S2-BADMT and BAMDT. The SBAMDT and BAMDT models outperform BART and SBART due to the use of multivariate splits that can more flexibly handle circular-shaped function discontinuities and U-shape complex domains. 
The win of SBAMDT over BAMDT illustrates the need to consider soft splits of the feature space by employing probabilistic decisions at decision tree internal nodes, which enables SBAMDT to better approximate functions with adaptive smoothness inside each cluster. 

Figures~\ref{Fig::ScU},~\ref{Fig::MAPE_ScU} and~\ref{Fig::crps_ScU} display the mean predictive surfaces, absolute prediction error (APE) and CRPS from SBAMDT and benchmark models for one randomly selected experiment, respectively. These figures again confirm that, in general, SBAMDT provides the best fit to the ground truth at most locations with smaller errors both around the circular jump and in the interior of the clusters. In contrast, BART and SBART suffer from some axis-parallel artifacts especially near the circular jump boundary. We also notice that BART and SBART have large errors at many locations near the domain boundaries separating the lower and upper arms of U-shape, mainly due to the fact that axis-parallel split could group two regions separated by physical barriers into one cluster. Moreover, it is evident from these figures that SBAMDT successfully reduces the prediction errors inside the clusters compared with BAMDT.

The additive decision tree models provide feature's importance by taking the posterior mean of how many times the feature occurs in the ensemble of trees, which we also include as a metric for model comparison. Figure~\ref{fig:VarC_ScU} shows the significance metric for both structured and unstructured features for all models using one of the fifty simulated data sets. The analysis demonstrates that all methods are capable  of identifying important features, $x_1$ and $\bfs$. 
SBART performs better in identifying noisy features since it applies a Dirichlet prior to the splitting probability proportion of features.

\subsection{2-D Square Example}
We now examine the performance of SBAMDT in a scenario on a simple square domain with horizontal and vertical jumps, which aligns with the assumptions of BART and SBART and hence favors these two competing methods. Specifically, data is generated using a piecewise smooth function on $[0,1]^2$ given by 
 \begin{equation*}
 f(\bfd) =
  \begin{cases}
  \sin(7x_1)\cos(4x_2)\ & \text{for}\ x_1\geq 0,\ x_2\leq 0   \\
  1 + \frac{2}{7}(2x_1 +1)^2 + (2x_2+1)^2 & \text{for}\ x_1\geq 0,\ x_2>0 \\
  5 & \text{for}\ x_1<0,\ x_2\leq 0.2  \\
  -5& \text{for}\ x_1<0,\ x_2>0.2.\\
  \end{cases}
  \end{equation*} 
  

For simplicity, we consider a noise-free environment without adding irrelevant features. In the SBAMDT and BAMDT models, $(x_1,x_2)$ is used as multivariate structural features, and each individual $x$ is also included as a univariate unstructural feature. This approach enables the algorithm to detect function variations within each cluster where both features are necessary, while allowing each univariate feature to be used for identifying axis-parallel splits.

We use a training data set of size $n=500$, a test data set of size $n_{test}=200$ and repeat the experiments 50 times with a noise level $\sigma = 0.1$.
A random subset of 140 training data is used as knots for SBAMDT and BAMDT.

The average performance metrics of Sk-BAMDT and the benchmark models over 50 replicates are summarised in Table~\ref{tab:PM_Toy}. For one of the simulated data sets, Figures~\ref{Fig::Toy} and~\ref{Fig::MAPE_Toy} display the absolute prediction error (APE) and mean predictive surfaces obtained from SBAMDT and benchmark models. While BART is favoured in this example, Sk-BAMDT outperforms its competitors. Sk-BAMDT performs better than BART because it can adjust to the function's greater smoothness levels while taking into account a comparatively small number of trees. Sk-BAMDT's win against SBART highlights our primary distinction: instead of enforcing a soft decision rule on every node, Sk-BAMDT permits some internal nodes to make hard decisions. Soft splits are required to adjust to greater smoothness levels of the regression function, as demonstrated by SBAMDT's win against BAMDT.
\begin{table}[H]
\caption{The average performance metrics for SBAMDT and benchmark models over 50 replicates. Numbers in parenthesis correspond to the standard deviation of the forecasts among the different replicates associated with Toy Example.} \label{tab:PM_Toy}
\begin{tabular}{p{2.5 cm} p{1.8 cm} p{1.8 cm} p{1.8cm}  p{2.5 cm}  p{2.5 cm} }
 \hline
 \multicolumn{6}{c}{ SBAMDT and Benchmark models}  \\
 \hline
   & BAMDT  &BART  & SBART & Sk-BAMDT (q=8) & S2-BAMDT (q=10)  \\ \hline
MAPE$\times10^{-1}$ & 3.46(0.46)  & 3.39(0.52) & 5.87(0.41)  & \textbf{2.79(0.67)} & 3.06 (0.71)   \\ 
RMSPE$\times10^{-1}$& 10.34(1.06)& 10.76(1.39) & 10.45(0.59) & \textbf{9.54(3.76)} & 11.35 (3.46)  \\
CRPS$\times10^{-1}$ & 2.91(0.43)& 2.79(0.48)& 4.31(0.32) & \textbf{2.14(0.65)} & 2.52 (0.68)  \\
 \hline
\end{tabular}
\end{table}

\begin{figure}[h]   
\centering\includegraphics[width=15cm]{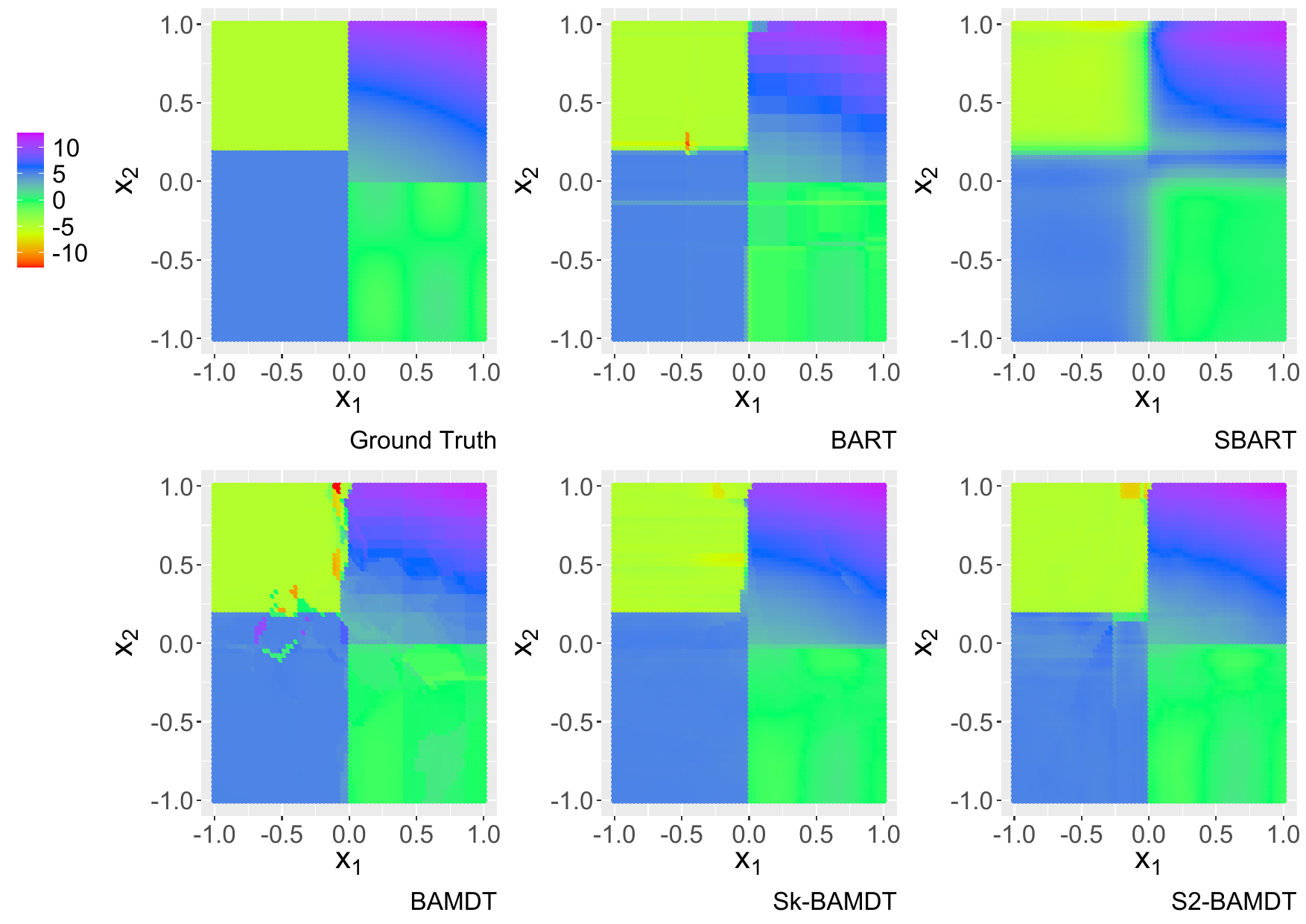}
  \caption{ The ground truth for $f(\bfs,\mathbf{x})$ and the predictive surfaces $\hat{f}(\bfs,\mathbf{x})$ of SBAMDT, BAMDT, BART and SBART associated with Toy Example.} \label{Fig::Toy}
\end{figure}

\begin{figure}[h]   
  \centering\includegraphics[width=15cm]{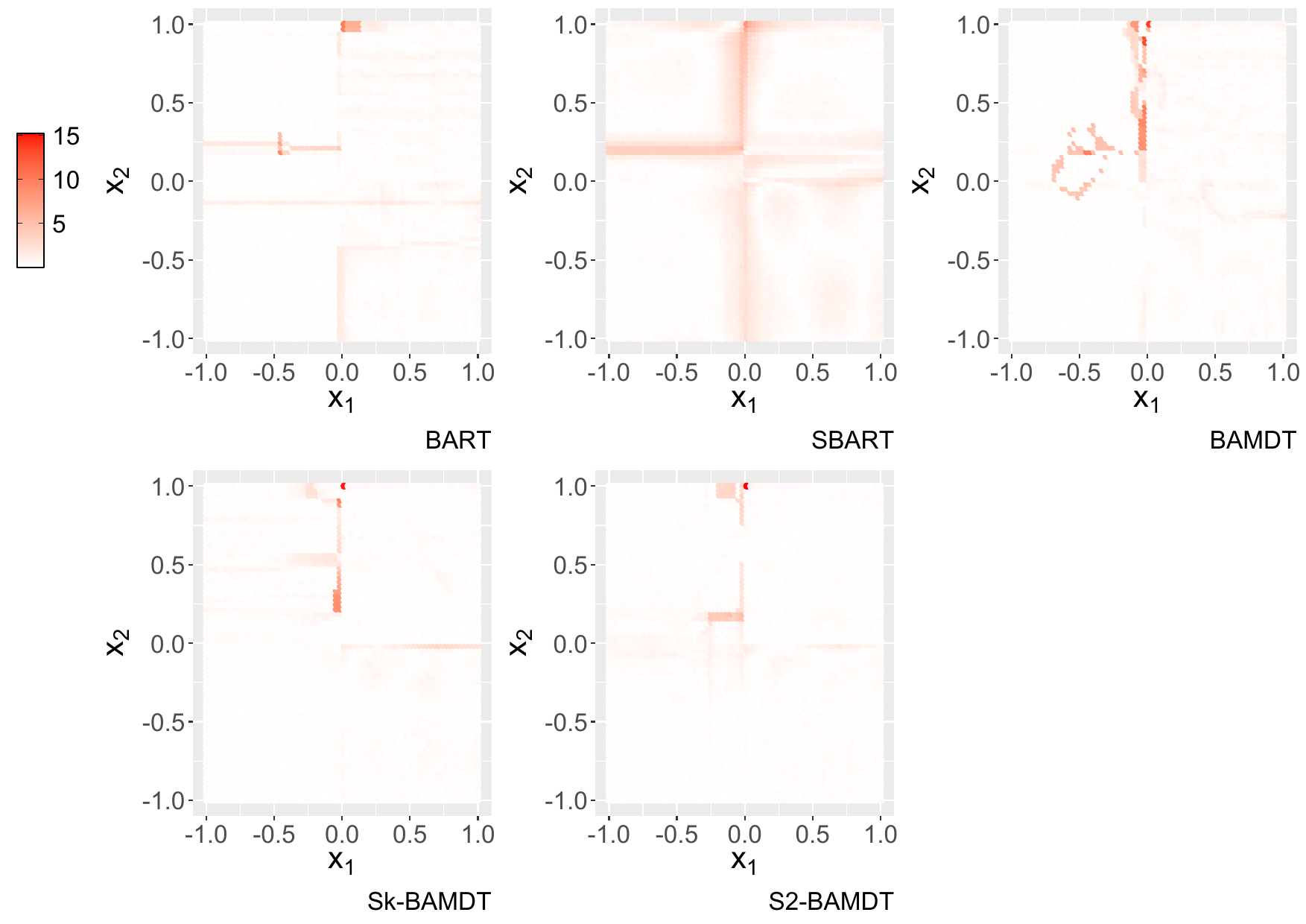}
  \caption{ The APE of SBAMDT, BAMDT, BART and SBART associated with Toy Example.} \label{Fig::MAPE_Toy}
\end{figure}

\section{Application to NYC Education}
\label{sec:NYD}
In this section, we use online data from NYC Education (2000) ~\citep{GeoDa} and apply SBAMDT and benchmark models (BAMDT, BART, SBART) to evaluate the predictive performance of SBAMDT. We model the logarithm of mean income as a function of the following variables: the total population under the age of eighteen ($x_1$); the percentage of all students enrolled in private schools ($x_2$); the percentage of the population aged sixteen to nineteen that has dropped out of high school ($x_3$); the percentage of the population aged twenty-five and over that has dropped out of high school ($x_4$); the percentage of the population aged twenty-five and over that has completed at least a bachelor's degree ($x_5$); the number of schools ($x_6$) and the location of the population (in latitude and longitude). We treat the location as a structured feature ($s_1$: longitude, $s_2$: latitude) and all other covariates as unstructured features ($\mathbf{x}$). 

The dataset comprises 1,690 entries, from which we selected 80\% for training and 20\% for testing. For the knots, a random selection of 420 training data points was used. We employed 100 grid points as potential univariate split cutoff values for each unstructured feature and executed MCMC algorithms for 20,000 iterations, retaining every tenth sample after discarding the first 10,000 iterations as burn-in. For SBART, we similarly discarded the first 10,000 iterations and saved 1,000 samples after thinning.

Performance metrics in log scale, presented in Table~\ref{tab:PM_NY}, indicate that SBAMDT achieves superior predictive accuracy compared to its competitors. Figure~\ref{Fig::NYTr} illustrates the fitted and observed income values at testing locations, demonstrating the model’s alignment with actual outcomes. Additionally, Figure~\ref{Fig::NY} showcases the mean prediction surfaces of both SBAMDT and benchmark models across various locations, with unstructured features set at their median values. Key statistics include a population of 650.08 individuals under eighteen, 24\% of students attending private schools, a high school dropout rate of 6.6\% for those aged 16–19, a 26\% dropout rate for individuals over 25, and 21\% of this group holding at least a bachelor's degree. Notably, the number of schools in the area is zero.

The comparison in Figure~\ref{Fig::NY} reveals that while BAMDT and SBAMDT conform to boundary requirements, BART and SBART fail to capture spatial patterns, particularly in Manhattan. Some areas of the SBART predictive surface are overly smooth, obscuring spatial characteristics. BART divides the domain into hyperrectangular segments, identifying axis-parallel boundaries without flexibility. In contrast, SBAMDT and BAMDT recognize more adaptable borders through multivariate splits. SBAMDT achieves smoother patterns and improved predictions, particularly in the Belt Parkway area, due to soft splits and probabilistic decisions at internal nodes.

Figure~\ref{fig:VarC_NY} presents the importance metrics for both structured and unstructured features across the models. Location emerges as the most significant determinant of income, followed by the number of schools ($x_6$) and the high school dropout rate for 16–19-year-olds ($x_3$). The differences in importance metrics between SBAMDT and BAMDT may stem from the softness control parameter, warranting further analysis in future work.

The population over 25 with at least a bachelor's degree is identified as the second most significant factor by Sk-BAMDT, BAMDT, and BART. We analyze its impact in areas such as Woodside, Long Island City, Astoria, Corona, and John F. Kennedy International Airport by fixing other unstructured variables at their median values. Figure~\ref{fig:VarCX_NY} depicts the marginal influence of higher educational attainment on income, showing a positive nonlinear relationship across regions, as anticipated. This effect varies by location, with individuals in Corona and Woodside earning more with higher education levels.

\begin{table}[H]
\caption{The performance metrics for SBAMDT and benchmark models (NYC Education Data).} \label{tab:PM_NY}
\begin{tabular}{p{2.5 cm} p{1.8 cm} p{1.8 cm} p{1.8 cm}  p{2.5 cm} p{2.5 cm} }
 \hline
 \multicolumn{6}{c}{Benchmark models}  \\
 \hline
   & BAMDT  &BART  & SBART & Sk-BAMDT (q=10) & S2-BAMDT (q=6) \\ \hline
MAPE$\times10^{-1}$& 1.77 & 1.83 & 1.76  & 1.66 & \textbf{1.65}  \\ 
RMSPE$\times10^{-1}$& 3.05 & 3.25 & 3.09 & 2.96 & \textbf{2.85}  \\
CRPS$\times10^{-1}$& 1.49 & 1.54& 1.50 & 1.34 & \textbf{1.32} \\
 \hline
\end{tabular}
\end{table}

\begin{figure}[h]   
    \centering\includegraphics[width=15cm]{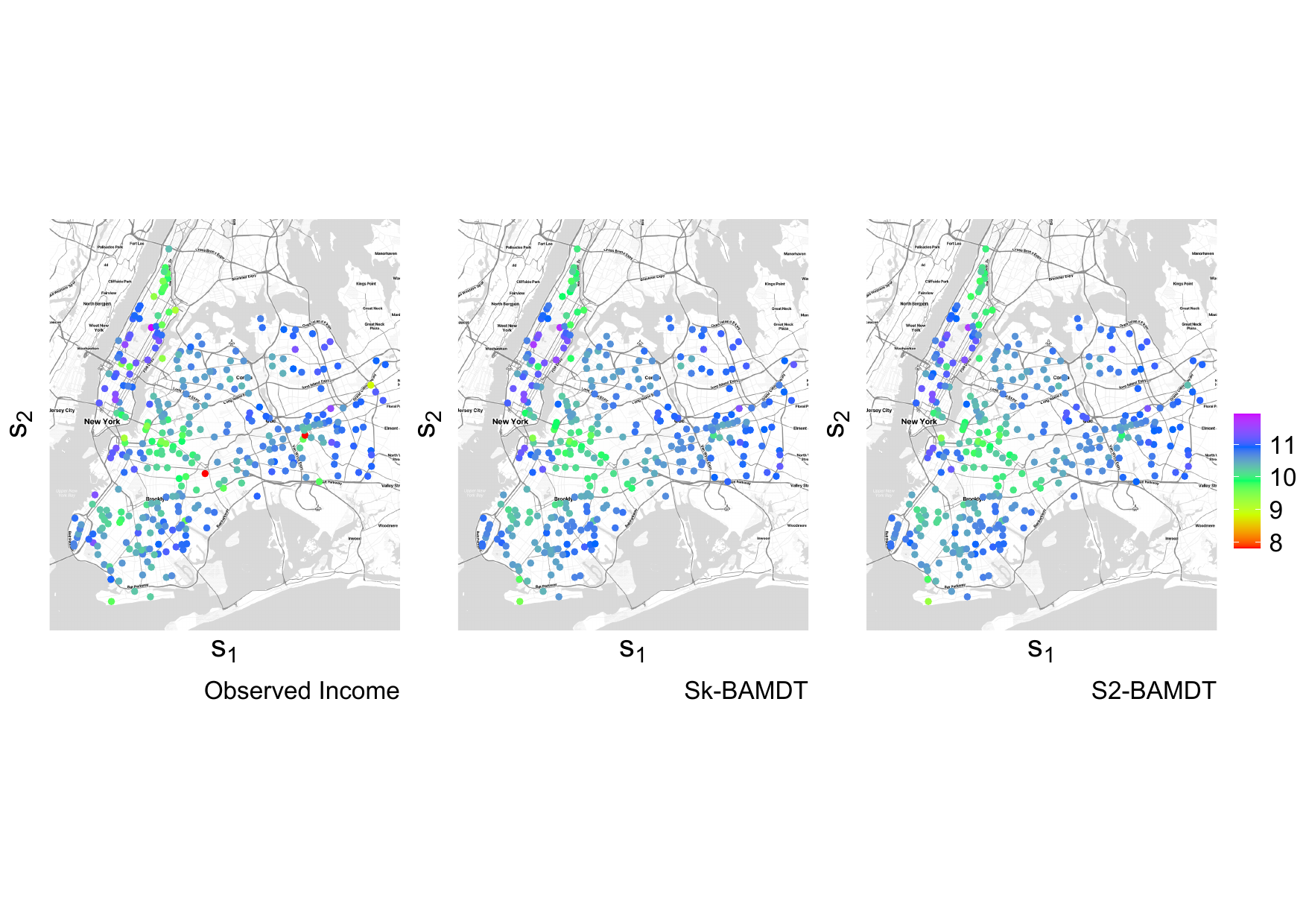}
  \caption{ The observed and fitted value of income at observed locations for SBAMDT.  } \label{Fig::NYTr}
\end{figure}

\begin{figure}[h]   
    \centering\includegraphics[width=15cm]{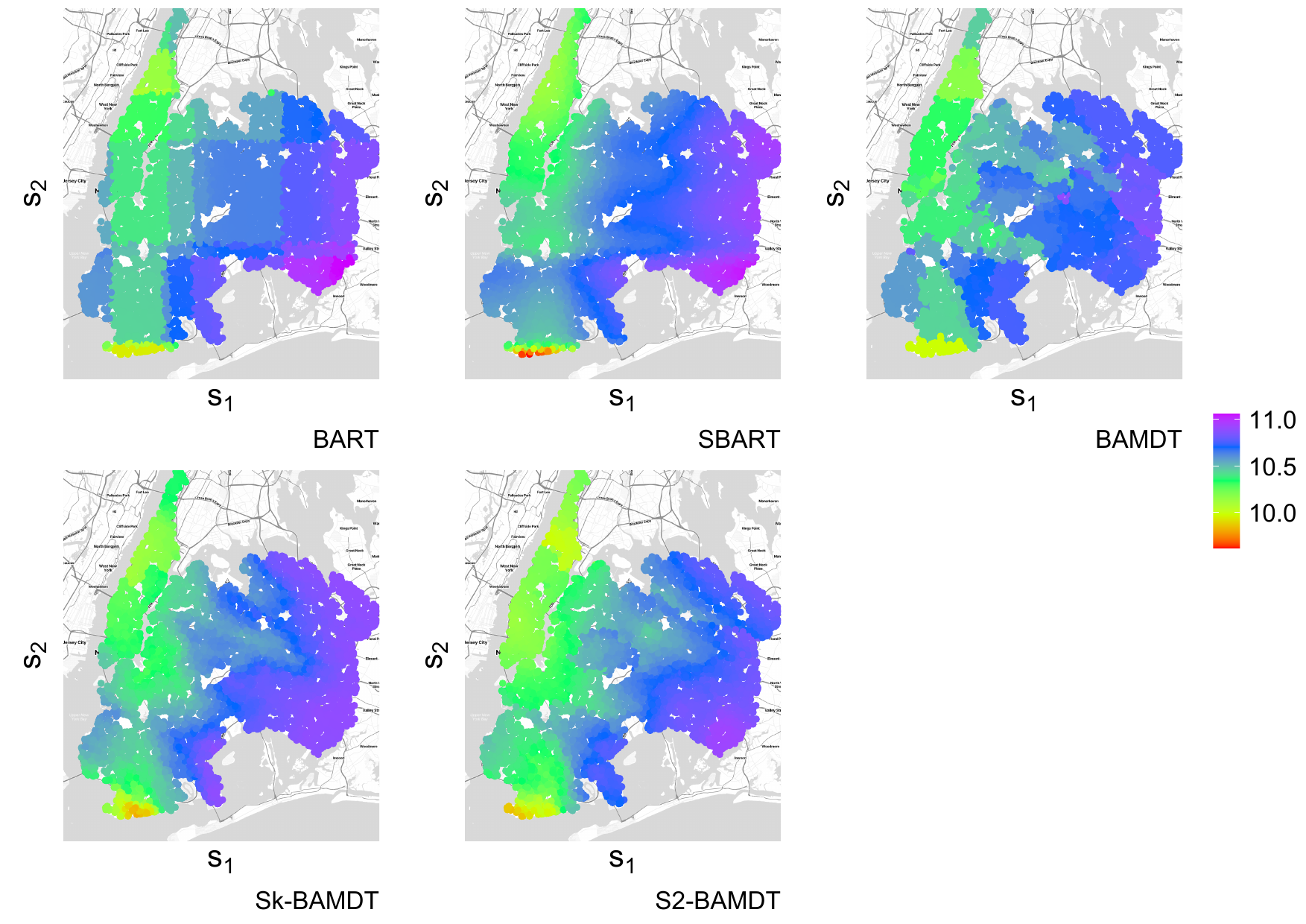}
  \caption{ The spatial partial dependence plots for SBAMDT, BAMDT, BART, and SBART. } \label{Fig::NY}
\end{figure}

\section{Conclusion}
\label{sec:conc}
In this paper, we introduce SBAMDT, a novel Bayesian additive decision tree model designed to model nonparametric functions with both structured and unstructured features. By employing an adaptive decision rule that allows both hard and soft decisions at each internal node, SBAMDT facilitates rapid transitions and adaptive smooth changes, enhancing its flexibility in capturing complex data relationships. This model allows for node-specific decision-making and accommodates structured features through multivariate splits as well as unstructured features via univariate splits. 
Our extensive simulation studies using both synthetic and real datasets show that SBAMDT consistently outperforms BART, SBART, and BAMDT, highlighting its efficacy in diverse applications.

Future work could focus on several areas for improvement. First, optimizing the code for faster execution would enhance its usability for large datasets. Second, incorporating a Dirichlet prior for feature splitting probabilities may refine the model's decision-making process further and improve its ability to handle higher dimensional features. Third, investigating the theoretical performance of function approximation through Bayesian posterior concentration theories could yield valuable insights into the model's behavior. Lastly, expanding the application of SBAMDT beyond nonparametric regression to tasks such as classification and causal inference presents exciting opportunities for further research.

\bigskip

\appendix
\section{The conditional integrated likelihood}
It holds that
\begin{enumerate}
        \item
        \begin{align*}
        P\left(\bfR^{(h)}|\bfM^{(h)},T^{(h)},\bfA^{(h)},\sigma^2,\sigma_{\mu}^2 \right) &= \left(2\pi\sigma^2\right)^{-n/2} \times  \\ &  \prod_{i=1}^n \exp\left(-\frac{1}{2\sigma^2}\left(R_i^2-2R_i\sum_{l=1}^{L_h}\phi_{hl}(\bfd_i)\mu_{hl} + \left(\sum_{l=1}^{L_h}\phi_{hl}(\bfd_i)\mu_{hl}\right)^2\right)\right)
        \end{align*}
          
         \item \begin{equation*}
         P(\bfM^{(h)}|T^{(h)},\sigma_{\mu}^2) = |2\pi\sigma_{\mu}^2|^{-1/2}\exp\left(-\frac{1}{2}\bfM^{(h)T}\frac{1}{\sigma_{\mu}^2}I\bfM^{(h)}\right)
         \end{equation*}
         \item
         \begin{align*}
         \left(\bfM^{(h)}-\Omega \frac{1}{\sigma^2}\sum_{i=1}^n\bm{\Phi}_i R_i^{(h)}\right)^{T}\Omega^{-1}\left(\bfM^{(h)}-\Omega \frac{1}{\sigma^2}\sum_{i=1}^n\bm{\Phi}_i R_i^{(h)}\right) &=
\\ \bfM^{(h)T}\Omega^{-1}\bfM^{(h)} - \frac{2}{\sigma^2} \sum_{i=1}^n R_i^{(h)}\bm{\Phi}_i^{T}\bfM^{(h)} + \hat{\mu}^T\Omega^{-1}\hat{\mu} \end{align*}
\end{enumerate}
Thus, the conditional integrated likelihood is given by
\begin{align*}
    P(\bfR^{(h)}|T^{(h)},\bfA^{(h)},\sigma^2,\sigma_{\mu}^2) &= (2\pi\sigma^2)^{-n/2} |2\pi \sigma_{\mu}^2I|^{-1/2}\exp\left(-\frac{||\bfR^{(h)}||^2}{2\sigma^2}\right)\exp\left(\frac{1}{2}\hat{\mu}^T\Omega^{-1}\hat{\mu}\right) \times\\
&\int {\exp\left((\bfM^{(h)}-\hat{\mu})^T\Omega^{-1}(\bfM^{(h)}-\hat{\mu})\right)d\bfM^{(h)}} \\
&=\frac{|2\pi\Omega|^{1/2}}{(2\pi\sigma^2)^{n/2}|2\pi\sigma^2I|^{1/2}}\exp\left(-\frac{||\bfR^{(h)}||^2}{2\sigma^2}+\frac{1}{2}\hat{\mu}^T\Omega^{-1}\hat{\mu}\right).
    \end{align*}

\section{The conditional distribution of $\bfM^{(h)}$} We have that \begin{align*}
P\left(\bfM^{(h)}|T^{(h)},\bfA^{(h)},\bfR^{(h)},\sigma^2,\sigma_{\mu}^2\right) &\propto P\left(\bfR^{(h)}|\bfM^{(h)},T^{(h)},\bfA^{(h)},\bfR^{(h)},\sigma^2,\sigma_{\mu}^2\right)P\left(\bfM^{(h)}|T^{(h)},\sigma^2_{\mu}\right) \\ 
&\propto \exp\left((\bfM^{(h)}-\hat{\pmb{\mu}}^{(h)})^{T}(\Omega^{(h)})^{-1}(\bfM^{(h)}-\hat{\pmb\mu}^{(h)})\right),
\end{align*} inducing that the conditional distribution of $\bfM^{(h)}$ is a multivariate Gaussian distribution with mean $\hat{\pmb{\mu}}^{(h)}$ and covariance matrix $\Omega^{(h)}$:
    \begin{equation*}
        \bfM^{(h)}|\bfR^{(h)}, T^{(h)},\bfA^{(h)},\sigma^2,\sigma_{\mu}^2 \sim \mbox{N}\left(\hat{\pmb{\mu}}^{(h)},\Omega^{(h)}\right).
    \end{equation*}
\section{The conditional distribution of $\sigma^2$} We have that
\begin{align*}
P\left(\sigma^2|\bfT,\pmb{M},Y,\bfA\right) &\propto P\left( Y|\bfT,\bfM,\sigma^2,\bfA \right)P\left(\sigma^2\right) \\
& \propto (\sigma^2)^{-1 -\frac{n+v}{2}} \exp\left(-\frac{1}{\sigma^2}\left(\frac{1}{2}\sum_{i=1}^n\left(Y_i-\sum_{h=1}^m\sum_{l=1}^{L_h}\mu_{hl}\phi_{hl}(\bfd_i)\right)^2+\frac{v\lambda}{2}\right) \right),
\end{align*} inducing that the conditional distribution $\sigma^2$ is an Inverse-gamma distribution with shape $\frac{n+v}{2}$ and scale $\frac{1}{2}\sum_{i=1}^n \left( Y_i -\sum_{h=1}^m\sum_{l=1}^{L_h}\mu_{hl}\phi_{hl}(\bfd_i) \right)^2 + \frac{v\lambda}{2}$.
\section{ The conditional distribution of $\sigma_{\mu}^2$} We have that 
    \begin{align*}
    P(\sigma_{\mu}^2|\bfT,\bfM,Y) & \propto P(\bfM|\bfT,\sigma_{\mu}^2)P(\sigma^2_{\mu}) \\ 
    & \propto (\sigma^2)^{-\alpha_{\mu} - \frac{1}{2}\sum_{h=1}^m L_h-1}
 \exp\left(-\frac{1}{\sigma^2_{\mu}} \left( \frac{1}{2}\sum_{h=1}^m\sum_{l=1}^{L_h} \mu_{hl}^2 + \beta_{\mu}\right)\right),
     \end{align*} inducing that the conditional distribution of $\sigma_{\mu}^2$
    is an Inverse-gamma distribution with shape $\alpha_{\mu} + \frac{1}{2}\sum_{h=1}^m L_h$ and scale $\frac{1}{2}\sum_{h=1}^m\sum_{l=1}^{L_h} \mu_{hl}^2 + \beta_{\mu}$. 

\section{The conditional distribution of $\bfp_A$ given Sk-BAMDT}
    We have that
    \begin{equation*}
        P(\bfp_A|\bfT,\bfA) \propto \prod_{h=1}^m\prod_\eta P(A_{\eta}^{(h)}|\bfp_A) P(\bfp_A) 
        \propto \prod_{l=0}^{k} p_{l}^{\sum_{h}\sum_\eta\mathds{1}\left(A_{\eta}^{(h)}=l\right) + \psi_{l}}, 
    \end{equation*}inducing that the conditional distribution of $\bfp_A$ is a Dirichlet distribution : 
    \begin{align*}
        P(\bfp_A|\bfT,\bfA) &\sim \mbox{Dirichlet}(\tilde{\psi}) \\
        \pmb{\tilde{\psi}}&=(\tilde{\psi}_0,\cdots,\tilde{\psi}_{k}) \\
        \tilde{\psi}_l &= \sum_{h} \sum_\eta \mathds{1}\left(A_{\eta}^{(h)}=l\right) +\psi_l.
    \end{align*}

    \section{The conditional distribution of $p_A$ given S2-BAMDT}
    We have that
    \begin{align*}
        P(p_A|\bfT,\bfA) &\propto \prod_{h=1}^m\prod_\eta P(A_{\eta}^{(h)}|p_A) P(p_A) \\
        &= p_A^{\sum_{h}\sum_\eta A_{\eta}^{(h)}+ s_a-1} (1-p_A)^{\sum_{h}\sum_\eta\left(1-A_{\eta}^{(h)}\right)+ s_b-1}, 
    \end{align*}inducing that the conditional distribution of $p_A$ is a Beta distribution : 
    \begin{align*}
         p_A|\bfT,\bfA &\sim \mbox{Beta}(\sum_{h}\sum_\eta A_{\eta}^{(h)}+s_a,\sum_{h}\sum_\eta (1-A_{\eta}^{(h)})+s_b ). 
    \end{align*}
    
    \section{The conditional distribution of $A_{\eta}^{(h)}$ given Sk-BAMDT} We have that 
    \begin{align*}
        P(A_{\eta}^{(h)}|T^{(h)},\mathbf{\bfp_A},\bfR^{(h)},\sigma^2,\sigma_{\mu}^2) &\propto P(\bfR^{(h)}|T^{(h)},A_{\eta}^{(h)},\sigma^2,\sigma_{\mu}^2)P(A_{\eta}^{(h)}|\mathbf{\bfp_A}) \\
        &= \prod_{l=0}^{k}\left(P\left(\bfR^{(h)}| T^{(h)}, A_{\eta}^{(h)}=l, \sigma^2, \sigma_{\mu}^2\right)p_{l}\right)^{\mathds{1}\left(A_{\eta}^{(h)}=l\right)} \\
        &\propto \prod_{l=0}^{k}\frac{p_{l}P\left(\bfR^{(h)}| T^{(h)}, A_{\eta}^{(h)}=l,\sigma^2, \sigma_{\mu}^2\right)}{\sum_j p_{j}P\left(\bfR^{(h)}| T^{(h)}, A_{\eta}^{(h)}=j, \sigma^2, \sigma_{\mu}^2\right)},  
        \end{align*} inducing that the conditional distribution of $A_{\eta}^{(h)}$ is a Multinomial distribution: 
       \begin{align*}
        A_{\eta}^{(h)}|T^{(h)},\mathbf{\bfp_A},\bfR^{(h)},\sigma^2,\sigma_{\mu}^2 &\sim \mbox{Multinomial} (\mathbf{w_{Ah}}) \\
        \mathbf{w_{Ah}}&=(w_{h0},\cdots,w_{h{k}}) \\
        w_{hl} &= \frac{p_l P\left(\bfR^{(h)}|T^{(h)},A_{\eta}^{(h)}=l,\sigma^2,\sigma_{\mu}^2\right)}{\sum_{j=0}^kp_j P\left(\bfR^{(h)}|T^{(h)},A_{\eta}^{(h)}=j,\sigma^2,\sigma_{\mu}^2\right)}.
    \end{align*}

\section{Proposals of Metropolis-Hastings for sampling the parameters $\{\alpha^{(h)}\}_{h=1}^m$} \label{Ap::MHCp}
We describe the proposals of Metropolis-Hastings to sample $\alpha^{(h)}|T^{(h)},\bfM^{(h)},\bfA^{(h)},\sigma^2,\sigma_{\mu}^2,\bfR^{(h)}$ (Algorithm~\ref{alg_MHa}). 
The chosen transition kernel q is a Gamma distribution with shape $d$ and rate $d/\alpha^{(h)}$, 
\begin{equation*}
a_{h}^* \sim \mbox{Gamma}\left(d,d/\alpha^{(h)}\right),
\end{equation*}
where $\alpha^{(h)}$ is the value of the former iteration. In our simulation study, we have used $d=20$.
The Hastings ratio can be expressed as the product of three terms:
\begin{itemize}
\item \textit{Transition ratio} 
\begin{equation*}
\mathrm{TR}=\frac{q\left(\alpha^{(h)}|a_{h}^*\right)}{q\left(a_{h}^{*}|\alpha^{(h)}\right)}=\frac{\left(\frac{1}{a_{h}^{*}}\right)^d \left(a^{(h)}\right)^{d-1}\exp\left(-\frac{d}{a_{h}^*}\alpha^{(h)}\right)} {\left(\frac{1}{\alpha^{(h)}}\right)^d \left(a_{h}^{*}\right)^{d-1}\exp\left(-\frac{d}{\alpha^{(h)}}a_{h}^{*}\right)}
\end{equation*}

\item \textit{Prior ratio}
\begin{equation*}
    \mathrm{PR}=\frac{P\left(a_{h}^*\right)}{P\left(\alpha^{(h)}\right)}=\left(\frac{a_{h}^*}{\alpha^{(h)}}\right)^{\alpha_g-1} \exp \left(-\beta_g\left(a_{h}^*-\alpha^{(h)}\right)\right)
\end{equation*}

\item \textit{Likelihood ratio} 
\begin{equation*}
\mathrm{LR}=\frac{P\left(\bfR^{(h)}|\bfM^{(h)},T^{(h)},\bfA^{(h)},a_{h}^*,\sigma^2 \right)}{P\left(\bfR^{(h)}|\bfM^{(h)},T^{(h)},\bfA^{(h)},\alpha^{(h)},\sigma^2 \right)}
\end{equation*}
Equation~\ref{eq:Lik} is applied twice to generate LR: once for the ratio's denominator and once for its numerator.

\end{itemize}

\begin{algorithm}[H] 
\caption{Metropolis-Hastings Algorithm for sampling from the posterior  $\alpha^{(h)}|T^{(h)},\bfM^{(h)},\bfA^{(h)},\sigma^2,\sigma_{\mu}^2,\bfR^{(h)}$ } 
\label{alg_MHa}
\begin{algorithmic}
\STATE{Generate a candidate value $a_{h}^{*}$ with probability $q\left(a_{h}^{*}|\alpha^{(h)}\right)$.}
\STATE {Set $\alpha^{(h)}=a_{h}^{*}$ with probability 
\begin{equation*}
\min \left\{1,\frac{q\left(\alpha^{(h)}|a_{h}^{*}\right)}{q\left(a_{h}^{*}|\alpha^{(h)}\right)} 
\frac{P\left(a_{h}^*\right)}{P\left(\alpha^{(h)}\right)}\frac{P\left(\bfR^{(h)}|\bfM^{(h)},T^{(h)},\bfA^{(h)},a_{h}^*,\sigma^2 \right)}{P\left(\bfR^{(h)}|\bfM^{(h)},T^{(h)},\bfA^{(h)},\alpha^{(h)},\sigma^2 \right)} \right\}  
\end{equation*} 
}
\end{algorithmic}
\end{algorithm}

\section{Proposals of Metropolis-Hastings for sampling tree structures} \label{Ap::MH_Trees} We describe the proposals of Metropolis-Hastings to sample the tree structures.
\paragraph{GROW proposal} This proposal randomly picks a terminal node, splits the chosen terminal node into two new nodes and assigns a decision rule to it. Let $\eta$ be the randomly picked terminal node in tree $T^{(h)}$. The Hastings ratio can be expressed as the product of three terms:
\begin{itemize}
\item \textit{Transition ratio} 
\begin{equation*}
\mathrm{TR}=\frac{q\left(T^{(h)}|T^{(h)*}\right)}{q\left(T^{(h)*}|T^{(h)}\right)}=\frac{P(\mathrm{PRUNE})N_s}{P(\mathrm{GROW})N_m p_{rule}(\eta)},
\end{equation*}
where $p_{rule}(\eta)$ is the probability of a decision rule assigned to node $\eta$, $N_m$ the number of internal nodes with two terminal childres and $N_s$ the number of terminal nodes in $T^{(h)}$.

\item \textit{Tree Structure ratio}
\begin{equation*}
    \mathrm{TSR}=\frac{P\left(T^{(h)*}\right)}{P\left(T^{(h)}\right)}=\frac{\gamma (1+d_{\eta})^{-\delta} \left(1-\gamma(2+d_{\eta})^{-\delta}\right)^2 p_{rule}(\eta)}{1-\gamma(1+d_{\eta})^{-\delta}},
\end{equation*}
where $d_{\eta}$ is the depth of node $\eta$.

\item \textit{Likelihood ratio} 
\begin{equation*}
\mathrm{LR}=\frac{P\left(\bfR^{(h)}|T^{(h)*},\sigma^2,\sigma_{\mu}^2,\bfA^{(h)*}\right)}{P\left(\bfR^{(h)}|T^{(h)},\sigma^2,\sigma_{\mu}^2,\bfA^{(h)}\right)}
\end{equation*}
 We apply Equation~\ref{eq:IntLik}, considering the proposed tree, $T^{(h)*}$, and each of the potential interior node decisions, $A_{\eta}^{(h)}$. We also use Equation~\ref{eq:IntLik}, considering the tree of the current iteration, $T^{(h)}$, and the determined interior node decisions of ancestors of $\eta$, $\bfA^{(h)}$. 

\begin{equation*}
\mathrm{LR}= \begin{cases}
\sum_{l=0}^k\frac{p_lP\left(\bfR^{(h)}|T^{(h)*},\bfA^{(h)},A_{\eta}^{(h)}=l,\sigma^2,\sigma^2_{\mu}\right)}{P\left(\bfR^{(h)}|T^{(h)},\bfA^{(h)},\sigma^2,\sigma^2_{\mu}\right)},\ \text{Sk-BAMDT} \\ \\
\frac{p_A P\left(\bfR^{(h)}|T^{(h)*},\bfA^{(h)},A_{\eta}^{(h)}=1,\sigma^2,\sigma^2_{\mu}\right)}{P\left(\bfR^{(h)}|T^{(h)},\bfA^{(h)},\sigma^2,\sigma^2_{\mu}\right)} + \frac{(1-p_A) P\left(\bfR^{(h)}|T^{(h)*},\bfA^{(h)},A_{\eta}^{(h)}=0,\sigma^2,\sigma^2_{\mu}\right)}{P\left(\bfR^{(h)}|T^{(h)},\bfA^{(h)},\sigma^2,\sigma^2_{\mu}\right)},\ \text{S2-BAMDT}
\end{cases}\\
\end{equation*}
\end{itemize}

\paragraph{PRUNE proposal} This proposal randomly picks a parent of two terminal nodes and turns it into a terminal node by collapsing the nodes below it. Let $\eta$ be the parent of two terminal nodes. The Hastings ratio can be expressed as the product of three terms:

\begin{itemize}
\item \textit{Transition ratio} 
\begin{equation*}
\mathrm{TR}=\frac{q\left(T^{(h)}|T^{(h)*}\right)}{q\left(T^{(h)*}|T^{(h)}\right)}=\frac{P(\mathrm{GROW})N_m p_{rule}(\eta)}{P(\mathrm{PRUNE})(N_s-1)}
\end{equation*}

\item \textit{Tree structure ratio}
\begin{equation*}
    \mathrm{TSR}=\frac{P(T^{(h)*})}{P(T^{(h)})}=\frac{ 1-\gamma(1+d_{\eta})^{-\delta}}{\gamma (1+d_{\eta})^{-\delta} \left(1-\gamma(2+d_{\eta})^{-\delta}\right)^2p_{rule}(\eta)}
\end{equation*}
\item \textit {Likelihood ratio}
\begin{equation*}
\mathrm{LR}=\frac{P\left(\bfR^{(h)}|T^{(h)*},\sigma^2,\sigma_{\mu}^2,\bfA^{(h)*}\right)}{P\left(\bfR^{(h)}|T^{(h)},\sigma^2,\sigma_{\mu}^2,\bfA^{(h)}\right)}
\end{equation*}
Similar to the GROW proposal, LR is derived by applying Equation~\ref{eq:IntLik} twice, once for the numerator and once for the denominator of the ratio.
\end{itemize}

\paragraph{Change proposal} This proposal randomly picks an internal node and randomly reassigns to it a probabilistic decision. For simplicity we are restricted to picking an internal node having two terminal nodes as children. The Hastings ratio can be expressed as the product of three terms:

\begin{itemize}
    \item \textit{Transition ratio}
    \begin{equation*}
        \mathrm{TR}=\frac{q\left(A_{\eta}^{(h)}|A_{\eta}^{(h)*}\right)}{q\left(A_{\eta}^{(h)*}|A_{\eta}^{(h)}\right)}=1
    \end{equation*}

    \item \textit{Assignment ratio}
    \begin{equation*}
        \mathrm{TAR}=\frac{P\left(A_{\eta}^{(h)*}\right)}{\left(A_{\eta}^{(h)}\right)}= \begin{cases}
 \frac{\prod_{j=0}^k p_j ^{\mathds{1}\left(A_{\eta}^{(h)*}=j\right)}}{\prod_{l=0}^k p_l ^{\mathds{1}\left(A_{\eta}^{(h)}=l\right)}},\ \text{Sk-BAMDT} \\ \\
\frac{p_A^{A_{\eta}^{(h)*}} (1-p_A)^{1-A_{\eta}^{(h)*}}}{p_A^{A_{\eta}^{(h)}} (1-p_A)^{1-A_{\eta}^{(h)}}},\ \text{Method 2} \\
\end{cases}\\
\end{equation*}

\item \textit{Likelihood ratio}

\begin{equation*}
\mathrm{LR}=\frac{P\left(\bfR^{(h)}|T^{(h)},\sigma^2,\sigma_{\mu}^2,\bfA^{(h)(-\eta)},A_{\eta}^{(h)*}\right)}{P\left(\bfR^{(h)}|T^{(h)},\sigma^2,\sigma_{\mu}^2,\bfA^{(h)(-\eta)},A_{\eta}^{(h)}\right)},
\end{equation*}
where $\bfA^{(h)(-\eta)}=\{A_b^{(h)}\}_{b\in T^{(h)}, b\neq \eta}$.
Equation~\ref{eq:IntLik} is applied twice to generate LR: once for the ratio's denominator and once for its numerator.
\end{itemize}

\section{Inference Algorithm for S2-BAMDT} \label{sec::S2BAMDT}
The primary difference between Sk-BAMDT and S2-BAMDT is how they handle probabilistic decisions for internal nodes. 
To infer the parameters $\left(\bfT,\bfM,\bfA,\pmb{\alpha},\sigma^2,\sigma^2_{\mu},p_A\right)$, we suggest using a Metropolis-Hastings within block Gibbs sampler (Algorithm~\ref{alg:MH_Method2}), similarly to Sk-BAMDT. The sampler requires $m$ successive draws from 
$P\left(T^{(h)},\bfM^{(h)},\bfA^{(h)}, \alpha^{(h)}|\sigma^2,\sigma_{\mu}^2,p_A,\bfR^{(h)}\right)$, followed by a draw of $\sigma^2$ from $P(\sigma^2|\bfT,\bfM,\bfA,\bfY)$, a draw of $\sigma^2_{\mu}$ from $P(\sigma^2_{\mu}|\bfT,\bfM,\bfA,\bfY)$ and a draw of $p_A$ from $P(p_A|\bfT,\bfA)$. In Appendix~\ref{Ap::MHCp}, we demonstrate the implementation of the Metropolis-Hastings algorithm that we propose to sample $\alpha^{(h)}|T^{(h)},\bfM^{(h)},\bfA^{(h)},\sigma^2,\sigma_{\mu}^2,\bfR^{(h)}$. 

Then, we present the expressions for the posterior distributions of $A_{\eta}^{(h)}$ and $p_A$ according to the assumptions of S2-BAMDT.

\begin{theorem}
\begin{enumerate}
       \item The conditional distribution of $p_A$ is a Beta distribution:
     \begin{equation*}
         P(p_A|\bfT,\bfA) \sim \mbox{Beta}\left(\sum_{h}\sum_\eta A_{\eta}^{(h)}+s_a,\sum_{h}\sum_\eta (1-A_{\eta}^{(h)})+s_b \right). 
     \end{equation*}
    \item The conditional distribution of $A_{\eta}^{(h)}$ is a Bernoulli distribution : 
    \begin{align*}
        A_{\eta}^{(h)}&|T^{(h)},p_A,\bfR^{(h)},\sigma^2,\sigma_{\mu}^2 \sim \mbox{Bernoulli} (w_{Ah}) \\
        w_{Ah} &= \frac{p_A P\left(\bfR^{(h)}|T^{(h)},A_{\eta}^{(h)}=1,\sigma^2,\sigma_{\mu}^2\right)}{p_A P\left(\bfR^{(h)}|T^{(h)},A_{\eta}^{(h)}=1,\sigma^2,\sigma_{\mu}^2\right) + (1-p_A) P\left(\bfR^{(h)}|T^{(h)},A_{\eta}^{(h)}=0,\sigma^2,\sigma_{\mu}^2\right)}.
    \end{align*}

\end{enumerate}
 
\end{theorem}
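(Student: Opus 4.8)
The plan is to treat both claims as standard Bayesian conjugacy computations, each following from a single application of Bayes' rule combined with the hierarchical model specification for S2-BAMDT. Both results are in fact the binary ($k=1$) specializations of the Sk-BAMDT statements already established, so the arguments run parallel to those in the corresponding appendix derivations and require no new machinery.

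For the first claim, I would start from the factorization $P(p_A\mid\bfT,\bfA)\propto P(\bfA\mid p_A)\,P(p_A)$, which holds because $p_A$ enters the joint prior only through the Bernoulli layer governing the decision indicators. Under the S2-BAMDT specification each $A_{\eta}^{(h)}$ is conditionally independent Bernoulli with success probability $p_A$, so the likelihood factorizes as $P(\bfA\mid p_A)=\prod_{h}\prod_{\eta}p_A^{A_{\eta}^{(h)}}(1-p_A)^{1-A_{\eta}^{(h)}}$. Multiplying by the $\mathrm{Beta}(s_a,s_b)$ prior density and collecting the exponents of $p_A$ and $1-p_A$ yields a kernel proportional to $p_A^{\sum_h\sum_\eta A_{\eta}^{(h)}+s_a-1}(1-p_A)^{\sum_h\sum_\eta(1-A_{\eta}^{(h)})+s_b-1}$, which I would recognize as the unnormalized density of the stated Beta distribution.

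For the second claim, I would again invoke Bayes' rule, writing $P(A_{\eta}^{(h)}\mid T^{(h)},p_A,\bfR^{(h)},\sigma^2,\sigma_\mu^2)\propto P(\bfR^{(h)}\mid T^{(h)},A_{\eta}^{(h)},\sigma^2,\sigma_\mu^2)\,P(A_{\eta}^{(h)}\mid p_A)$, where the first factor is the conditional integrated likelihood of Equation~\eqref{eq:IntLik} with the leaf parameters $\bfM^{(h)}$ marginalized out. The key point to make explicit is that this integrated likelihood genuinely depends on the value of $A_{\eta}^{(h)}$: switching between a hard and a soft decision at node $\eta$ alters the path probabilities $\phi_{hl}$ and hence the design vectors $\bm{\Phi}_i$ entering $\Omega$ and $\hat{\mu}$. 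Since $A_{\eta}^{(h)}$ takes only the two values $0$ and $1$, normalizing the product over this binary support immediately produces the Bernoulli success probability $w_{Ah}$ in the stated form.

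The computations are routine; the only step requiring care is justifying that, conditional on the remainder of the tree, the correct likelihood for updating $A_{\eta}^{(h)}$ is the integrated likelihood rather than the full conditional likelihood in Equation~\eqref{eq:Lik}. This is precisely the collapsing step used throughout the backfitting sampler and is already licensed by the derivation of Equation~\eqref{eq:IntLik}, so no genuine obstacle arises beyond specializing the Multinomial argument to the two-category case.
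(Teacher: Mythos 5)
Your proposal is correct and follows essentially the same route as the paper: the Beta update for $p_A$ is obtained exactly as in the paper's appendix by multiplying the factorized Bernoulli likelihood of $\bfA$ by the $\mathrm{Beta}(s_a,s_b)$ prior and collecting exponents, and the Bernoulli update for $A_{\eta}^{(h)}$ is the two-category specialization of the paper's Multinomial derivation, using the integrated likelihood of Equation~\eqref{eq:IntLik} times the prior and normalizing over the binary support. No meaningful differences from the paper's own argument.
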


\begin{algorithm}[H] 
\caption{Metropolis-Hastings within Gibbs sampler (S2-BAMDT)} 
\begin{algorithmic}
\label{alg:MH_Method2}
\FOR{$t=1,2,3,..$ }
\FOR{$h=1$ to $m$ }
\STATE{Sample $\alpha^{(h)}|T^{(h)},\bfM^{(h)},\bfA^{(h)},\sigma^{2},\sigma_{\mu}^{2},\bfR^{(h)}$ }
\STATE{Sample $T^{(h)}|\bfR^{(h)},\sigma^2,\sigma_{\mu}^2,p_A$ using a Metropolis-Hastings Algorithm.}
\STATE{If $T^{(h)}$ grows, sample the probabilistic decision for the new internal node $\eta$, 
 $A_{\eta}^{(h)}|T^{(h)},p_A,\bfR^{(h)},\sigma^2,\sigma_{\mu}^2$, from $\mbox{Bernoulli} (w_{Ah})$.}
\STATE{Sample $ \bfM^{(h)}|\bfR^{(h)}, T^{(h)},\bfA^{(h)},\sigma^2,\sigma_{\mu}^2$ from $\mbox{N}\left(\hat{\pmb\mu}^{(h)},\Omega^{(h)}\right)$. }
\ENDFOR
\STATE{Sample $\sigma^{2}|\bfT,\bfM,\bfA,\bfY$ from $\mbox{Inverse-Gamma}(s_{1\sigma},s_{2\sigma})$. }
\STATE{Sample $\sigma^{2}_{\mu}|\bfT,\bfM,\bfA,\bfY$ from $\mbox{Inverse-Gamma}(s_{1\mu},s_{2\mu})$. }
\STATE{Sample $p_A|\bfT,\bfA$ from $\mbox{Beta}\left(\sum_{h}\sum_\eta A_{\eta}^{(h)}+s_a,\sum_{h}\sum_\eta \left(1-A_{\eta}^{(h)}\right)+s_b \right)$.}
\ENDFOR
\end{algorithmic}
\end{algorithm}

\section{U-shape Example 2}
In that example, we also consider a 45 degree rotated U-shape domain using latitude and longitude as structured features.  We  generate ten unstructured features uniformly distributed in the interval (0,1), $x_i\sim \mbox{U}\left(0,1\right)$. 
The U-shape domain is divided into three pieces by a circle with a radius of 0.9 centered at the origin. Assuming three Gaussian processes, we built a piecewise function $f(\cdot)$ that is dependent on $\left(\bfs,x_1, x_2\right)$ and has separate covariance matrices derived by squared exponential covariance functions for each process. We assume a training data set of size $n = 800$ and a test data set of size $n_{test} = 300$. The responses in the training and test data are generated using Equation~\ref{eq:model} at a noise level of $\sigma = 0.1$; 50 replicates are simulated. A random subset of 160 training data is used as knots for SBAMDT and BAMDT.

We discard the first 5000 iterations as burn-in and applied thinning by retaining one sample every five iterations from the remaining 5000 iterations of the MCMC algorithms in SBAMDT, BAMDT and BART. For SBART, we discard the first 8,000 iterations as burn-in and save 1000 samples after thinning. 

\begin{table}[h]
\caption{The average performance metrics for SBAMDT and benchmark models over 50 replicates. Numbers in parenthesis correspond to the standard deviation of the forecasts among the different replicates associated with U-shape Example 2.} \label{tab:PM_XPSmooth}
\begin{tabular}{p{2.5 cm} p{1.8 cm} p{1.8 cm} p{1.8cm}  p{2.5 cm} p{2.5 cm}    }
 \hline
 \multicolumn{6}{c}{SBAMDT and Benchmark models}  \\
 \hline
   & BAMDT  &BART  & SBART & Sk-BAMDT (q=6) & S2-BAMDT (q=4)\\ \hline
MAPE$\times10^{-1}$& 1.82(0.10)  &2.06(0.10)  & 2.00(0.08)  & 1.49(0.10)& \textbf{1.46} (0.10) \\ 
RMSPE$\times10^-1$& 2.50(0.16)& 2.81(0.15)& 2.72 (0.10) & 2.13(0.15) & \textbf{2.06}(0.17)  \\
CRPS$\times10^-1$& 1.45(0.10)&  1.65(0.09)& 1.53(0.06) & \textbf{1.13}(0.07)& 1.14(0.08)  \\
 \hline
\end{tabular}
\end{table}

\begin{figure}[h]   
    \centering\includegraphics[width=15cm]{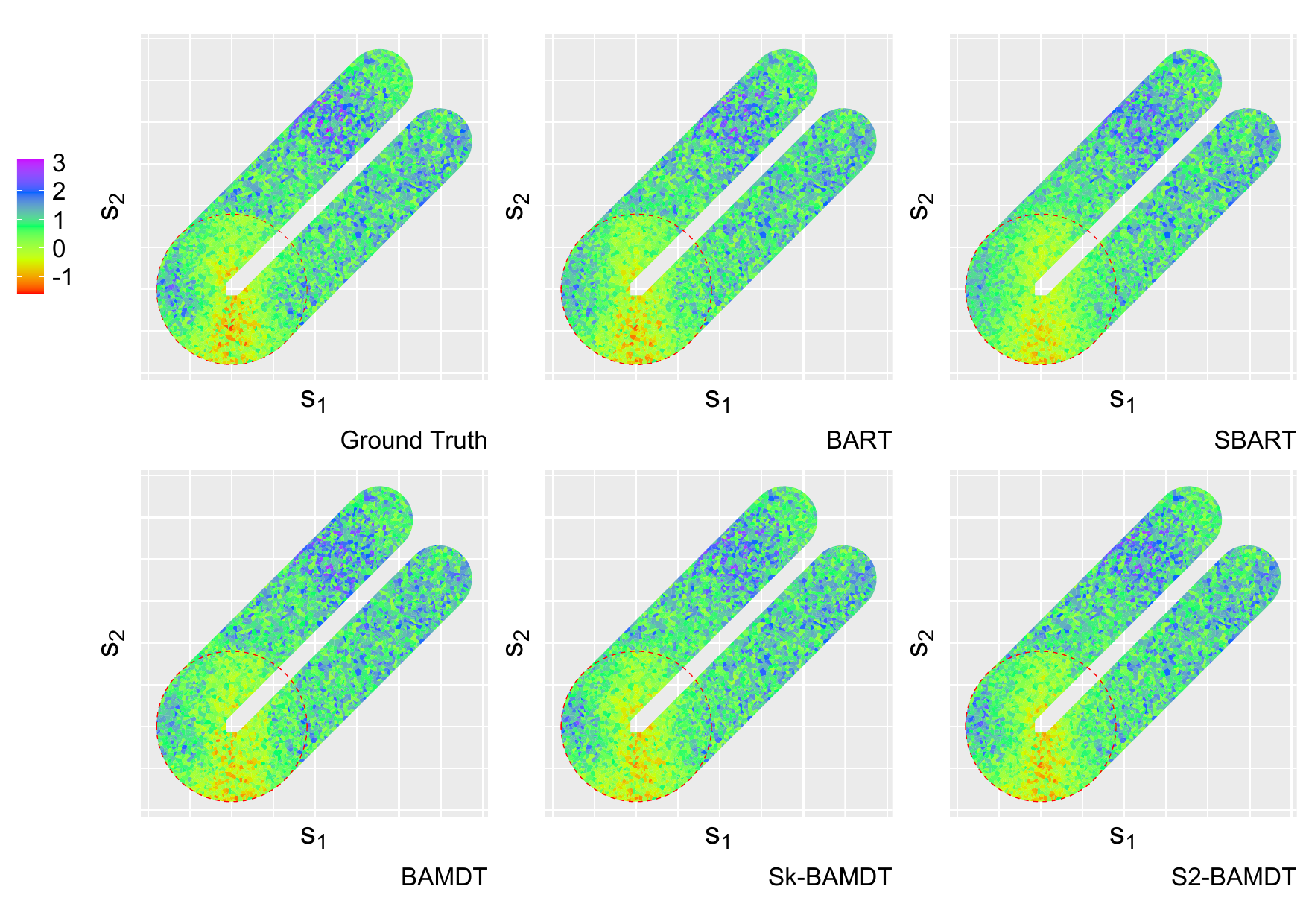}
      \caption{ The ground truth for $f(\bfs,\mathbf{x})$ and the spatial of SBAMDT, BAMDT, BART and SBART associated with U-shape Example 2. Red dashed circle indicate discontinuity surfaces in the true function projected to $\mathcal{M}$.} \label{Fig::XPSmooth}
\end{figure}

\begin{figure}[h]   
    \centering\includegraphics[width=15cm]{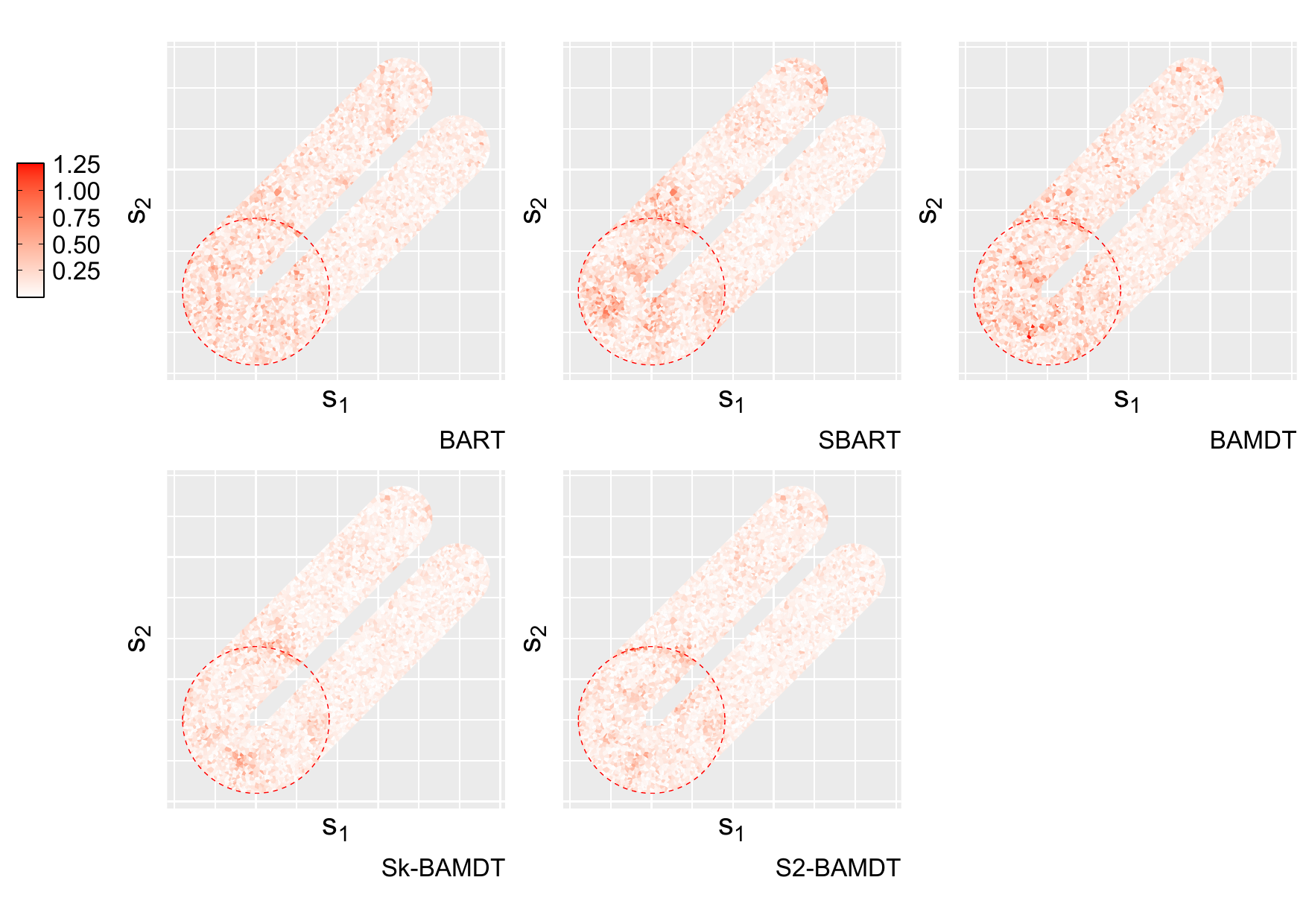}
     \caption{ The APE of SBAMDT, BAMDT, BART and SBART associated with U-shape Example 2.} \label{Fig::MAPE_XPSmooth}
\end{figure}

\begin{figure}[h]   
    \centering\includegraphics[width=15cm]{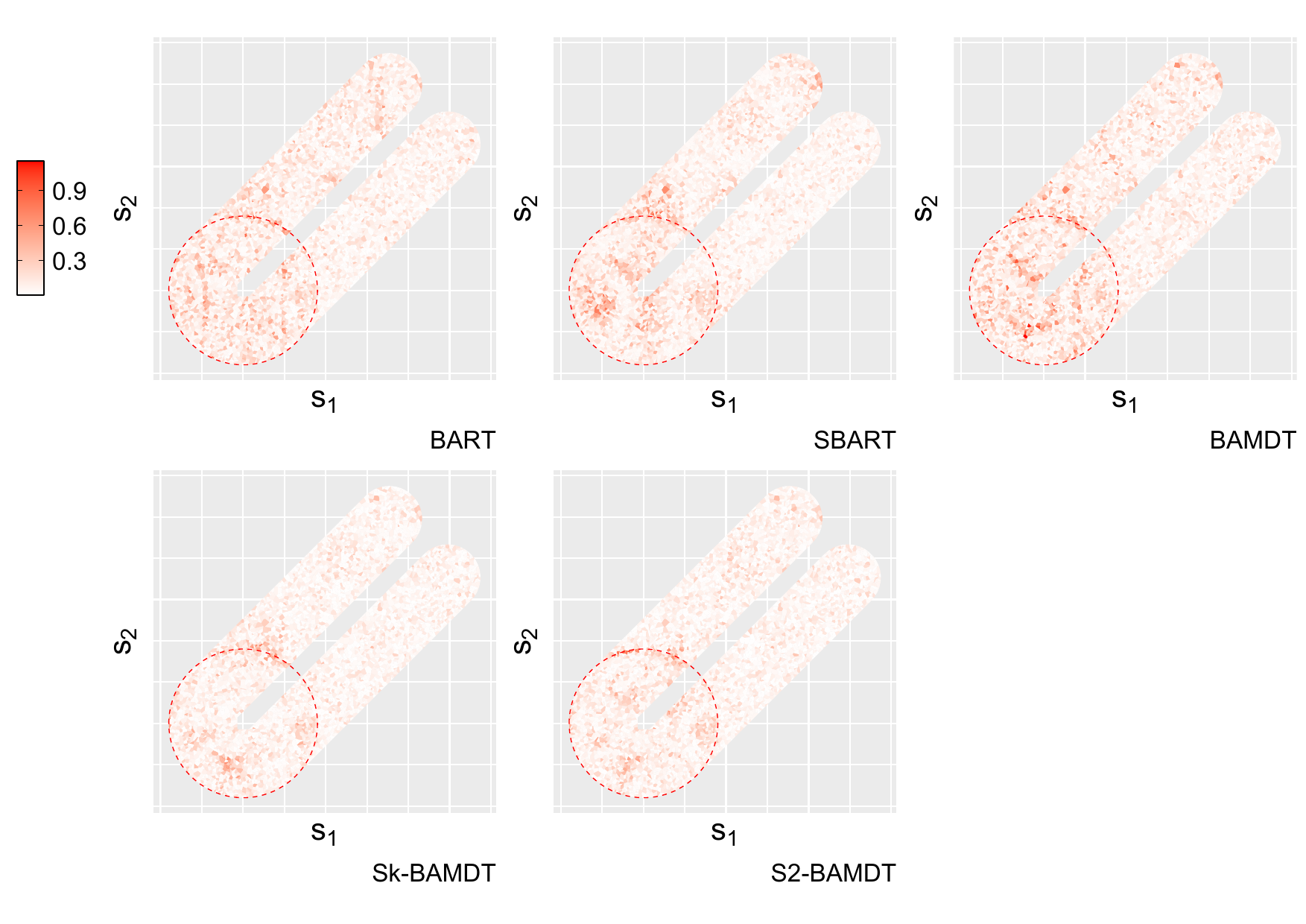}
    \caption{ The CRPS of SBAMDT, BAMDT, BART and SBART associated with U-shape Example 2.} \label{Fig::CRPS_XPSmooth}
\end{figure}

\begin{figure}[h]   
    \centering\includegraphics[width=15cm]{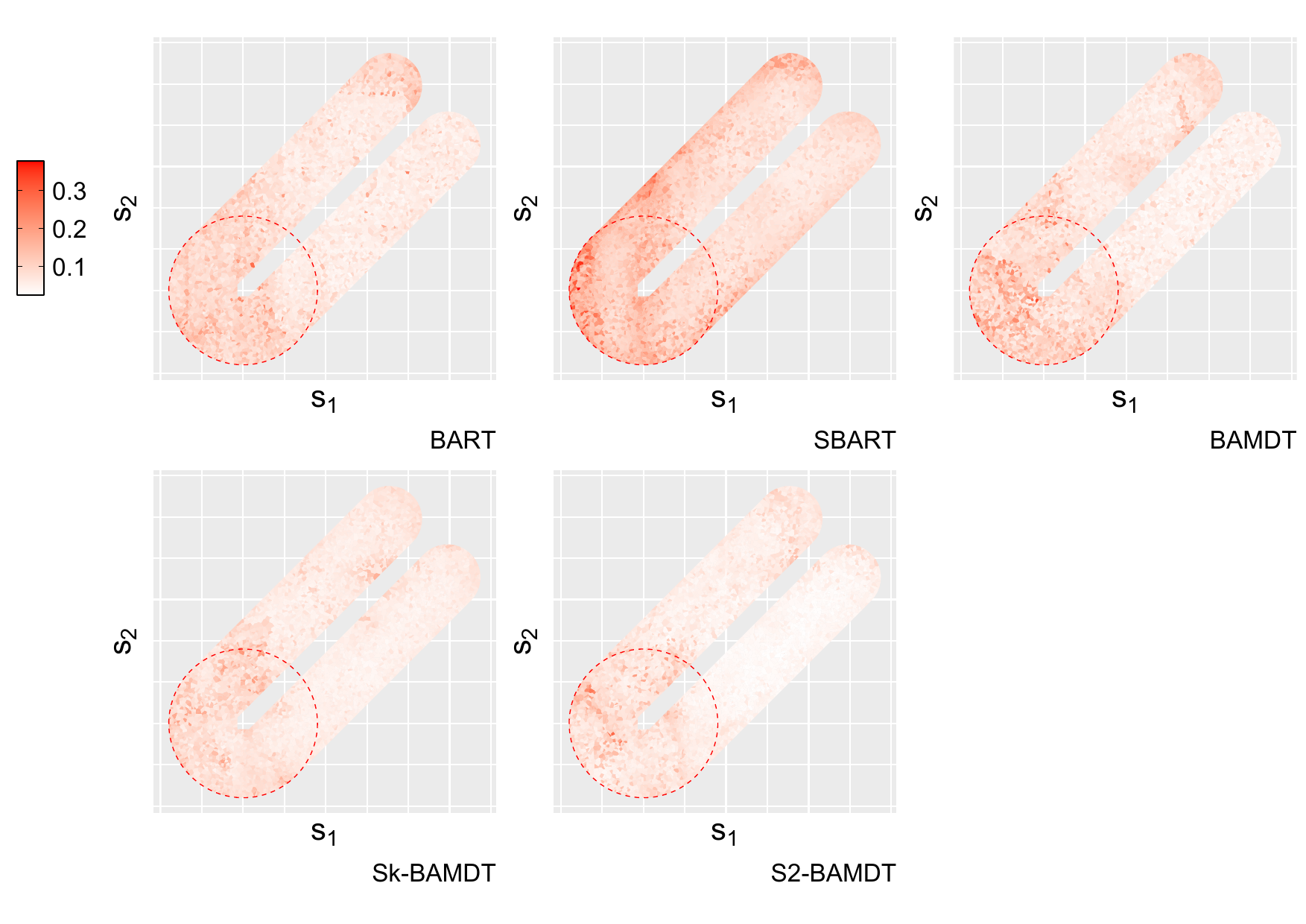}
  \caption{ The sd of SBAMDT, BAMDT, BART and SBART associated with U-shape Example 2.} \label{Fig::sd_XPSmooth}
\end{figure}

\begin{figure}
\begin{center}
\centering\includegraphics[width= 9cm]{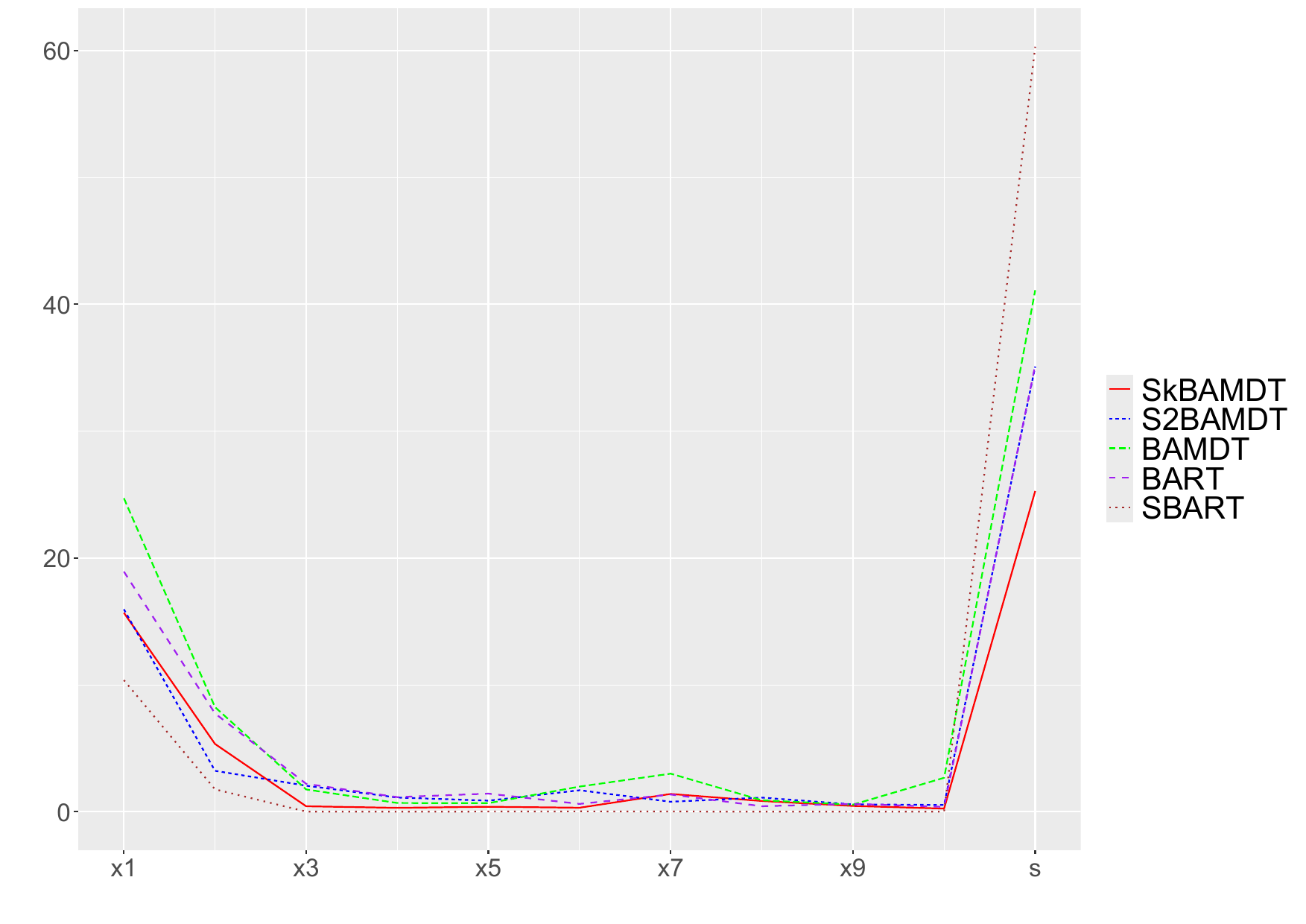}
\end{center}
\caption{The importance metric for each feature given by Sk-BAMDT, S2-BAMDT, BAMDT, BART and SBART associated with U-shape Example 2. \label{fig:VarC_XPSmooth}}
\end{figure}

The average performance measures of the benchmark models and SBAMDT over 50 replicates are summarised in Table~\ref{tab:PM_XPSmooth}, which shows that SBAMDT performs better than its competitors. The analysis highlights the need for making probabilistic decisions about hard or soft boundaries at each internal node as well as multivariate and soft splits in the feature space.

For a single randomly selected simulated test data set, the mean predictive surfaces, absolute percentage error (APE), CRPS and standard deviation (sd) of predictions from SBAMDT and benchmark models are shown in Figures~\ref{Fig::XPSmooth},~\ref{Fig::MAPE_XPSmooth},~\ref{Fig::CRPS_XPSmooth} and ~\ref{Fig::sd_XPSmooth}. The results show that SBAMDT produces predictions with less uncertainty and offers the best fit to the ground truth. Within clusters, SBAMDT operates very well with low errors close to the cluster boundary. We don't see that behaviour in its competitors, which supports SBAMDT's win. Using one of the fifty simulated data sets, Figure~\ref{fig:VarC_XPSmooth} displays the significance metric for both structured and unstructured characteristics for every model. Similar to our other simulation results, SBAMDT prioritize the non-noisy features.

\section{Proof of Theorem 1} \label{Theorem1}
\begin{enumerate}
    \item 
We consider that $\bfT$ and $\bfA$ are known. For $\mu_{hl}$, we have used a normal prior with a zero mean and variance $\sigma^2_{\mu}$. The variance $\sigma^2_{\mu}$ is subjected to an inverse-Gamma prior with hyperparameters $\alpha_{\mu}$ and $\beta_{\mu}$.\\
The expected value of $f$ by integrating the function
over the prior distribution of 
$\bfM$ is given by:
\begin{equation}
    E\left(f(\bfd)|\bfT,\bfA\right)=\sum_{h=1}^m\sum_{l=1}^{L_h} \phi_{hl}(\bfd)E\left(\mu_{hl}|T^{(h)}\right)=0.
\end{equation}
The covariance function of $f$ with respect to the prior distribution of $\bfM$ is given by:
\begin{align*}
    C(f_i,f_j)&=E\left((f(\bfd_i)-0)(f(\bfd_j)-0)|\bfT,\bfA\right)\\
    &=\sum_{h=1}^m\sum_{l=1}^{L_h}\phi_{hl}(\bfd_i)\phi_{hl}(\bfd_j) \mathrm{Var}\left(\mu_{hl}\right)\\
    &=\frac{\beta_{\mu}}{\alpha_{\mu}-1}\sum_{h=1}^m\sum_{l=1}^{L_h}\phi_{hl}(\bfd_i)\phi_{hl}(\bfd_j).
\end{align*}
The gradient of $f$ with respect to $\bfd$ is given as follows:
\begin{equation*}
    \nabla_\bfd f(\bfd)=\sum_{h=1}^m\sum_{l=1}^{L_h} \mu_{hl}\nabla_{\bfd}\phi_{hl}(\bfd).
\end{equation*}
We use the product rule to find the gradient of the probability of a data point $\bfd$ falling into the terminal node $l$:
\begin{align*}
    \nabla_{\bfd}\phi_{hl}(\bfd)&=\sum_{\eta \in P_{hl}(\bfd)} \nabla_{\bfd} P_{\eta}\left(\bfd;A_\eta^{(h)}\right)\prod_{v \in P_{hl}(\bfd), v\neq \eta}P_v\left(\bfd; A_v^{(h)} \right), \quad \text{where}\\  
    \nabla_{\bfd} P_{\eta}\left(\bfd; A_{\eta}^{(h)}\right)&=\left(\nabla_{\bfd}[z_{\eta L}^{(h)}(\bfd)| A_{\eta}^{(h)}]\right)
    ^{\mathds{1}\left(r_{\eta L}^{(h)}=1\right)}\big(-\nabla_{\bfd}[z_{\eta L}^{(h)}(\bfd)| A_{\eta}^{(h)} ]\big)^{1-\mathds{1}\left(r_{\eta L}^{(h)}=1\right)}.
\end{align*}

\begin{enumerate}
    \item \textbf{Univariate split:} Let $x_{i}$ be the uniformly chosen unstructured feature, and $x^*_{iL}$ and $x^*_{iR}$ the values of the $i_{th}$ unstructured feature for the nearest left and right knot at node $\eta$. For a univariate split, the derivative of the decision rule function $[z_{\eta L}^{(h)}(\bfd)| A_{\eta}^{(h)}]$ is given by:
    \begin{align*}
        \nabla_{\bfd}[z_{\eta L}^{(h)}(\bfd)| A_{\eta}^{(h)}\neq 0]= 
        \left[\begin{array}{c}
            0 \\ \vdots \\ 
            C_i \\ \vdots \\ 0
        \end{array}\right].
    \end{align*}
    \begin{enumerate}
        \item Soft split ($A_{\eta}^{(h)}\neq 0$):
        \begin{equation}
        \begin{aligned}[b]
        C_i &= \diff{ \left(1 + e^{-\alpha_c^{(h)}\frac{|x_i - x^*_{iR}|-|x_i - x^*_{iL}|}{C_\eta^{(h)}}}\right)^{-1}}{x_i} \\ 
        &=  \left(1 + e^{-\alpha_c^{(h)}\frac{|x_i - x^*_{iR}|-|x_i - x^*_{iL}|}{C_\eta^{(h)}}}\right)^{-2} e^{-\alpha_c^{(h)}\frac{|x_i - x^*_{iR}|-|x_i - x^*_{iL}|}{C_\eta^{(h)}}} \\ 
        &\quad \times \frac{2\alpha_c^{(h)}}{C_\eta^{(h)}}\big(-\mathds{1}\left( x ^{*}_{iL}< x_{i}<x^*_{iR}\right) + \mathds{1}\left( x ^{*}_{iR}< x_{i}<x^*_{iL}\right)\big).
        \end{aligned}
        \label{eq:UniSplit}
        \end{equation}
        \item Hard split ($A_{\eta}^{(h)}= 0$): 
        \begin{align*}
        C_i = -\delta\left(x_i - \frac{x^*_{iR}+x^*_{iL}}{2}\right)\mathds{1}\left(x^*_{iL}<x_i<x^*_{iR}\right) + \delta\left(x_i - \frac{x^*_{iR}+x^*_{iL}}{2}\right)\mathds{1}\left(x^*_{iR}<x_i<x^*_{iL}\right),
        \end{align*}
        where $\delta(\cdot)$ denotes the Dirac delta function. The partial derivative with respect to $x_i$ is 0 except for $x_i=\frac{x_{iR}^* + x_{iL}^*}{2}$, where the impulse makes the derivative undefined. SBAMDT can, however, approach the hard split via a soft split if the softness control parameter $\alpha_c^{(h)}$ is set to infinity. 
    \end{enumerate}

    \item \textbf{Multivariate split:} Let $\bfs^*_{L}$ and $\bfs^*_{R}$ be the structured feature of the nearest left and right knot at node $\eta$. For multivariate splits:
    \begin{enumerate}
        \item Soft split ($A_{\eta}^{(h)}\neq 0$):
        \begin{equation}
        \begin{aligned}[b]
        C_s &= \nabla{\bfs} \left(1 + e^{-\alpha_c^{(h)}\frac{\|\bfs - \bfs^*_{R}\|_2 - \|\bfs - \bfs^*_{L}\|_2}{C_\eta^{(h)}}}\right)^{-1} \\
         &= \left(1 + e^{-\alpha_c^{(h)}\frac{\|\bfs - \bfs^*_{R}\|_2 - \|\bfs - \bfs^*_{L}\|_2}{C_\eta^{(h)}}}\right)^{-2} e^{-\alpha_c^{(h)}\frac{\|\bfs - \bfs^*_{R}\|_2 - \|\bfs - \bfs^*_{L}\|_2}{C_\eta^{(h)}}} \\
        &\quad \times \left(\frac{\alpha_c^{(h)} }{C_\eta^{(h)}}\right) 
        \left(\frac{\bfs - \bfs^*_R}{\|\bfs - \bfs^*_R\|_2} - \frac{\bfs - \bfs^*_L}{\|\bfs - \bfs^*_L\|_2}\right).
        \end{aligned}
        \label{eq:MultiSplit}
        \end{equation}
        \item Hard split ($A_{\eta}^{(h)}= 0$): 
        \begin{equation}
        C_s = -\delta\left( (\bfs_R^* - \bfs_L^*)^T\bfs - \frac{\bfs_R^{*T}\bfs_R^* - \bfs_L^{*T} \bfs_L^* }{2} \right) (\bfs_R^* - \bfs_L^*).
        \end{equation}
        The derivative does not exist at:
        \[
        (\bfs_R^* - \bfs_L^*)^T\bfs = \frac{\bfs_R^{*T}\bfs_R^* - \bfs_L^{*T} \bfs_L^* }{2}.
        \]
    \end{enumerate}
\end{enumerate}

\item
We assume that $\mathbf{T}$ is known. The expected value of \( g^{(h)}(\mathbf{d}) = \sum_{l=1}^{L_h} \mu_{hl} \phi_{hl}(\mathbf{d}) \), when integrated over the prior distributions of \( \mathbf{M} \) and \( \mathbf{A} \), is given by:
\begin{equation*}
    E\left(g^{(h)}(\mathbf{d})\right) = \sum_{l=1}^{L_h} E\left(\mu_{hl} \phi_{hl}(\mathbf{d}) \mid \mathbf{T}^{(h)}\right) = 0.
\end{equation*}

The variance of \( g^{(h)}(\mathbf{d}) \) is:
\begin{equation*}
    \text{Var}\left(g^{(h)}(\mathbf{d})\right) = \sum_{l=1}^{L_h} \text{Var}(\mu_{hl}) E(\phi_{hl}^2(\mathbf{d})) < \infty.
\end{equation*}

By the Central Limit Theorem (CLT), as the number of trees \( m \) goes to infinity, the function \( f(\mathbf{d}) = \sum_{h=1}^{m} g^{(h)}(\mathbf{d}) \) approaches a normal distribution with:
\begin{itemize}
    \item \textbf{Mean:} \( \sum_{h=1}^m E\left(g^{(h)}(\mathbf{d})\right) \),
    \item \textbf{Variance:} \( \sum_{h=1}^m \text{Var}\left(g^{(h)}(\mathbf{d})\right) \).
\end{itemize}

Thus, as \( m \to \infty \), we can model the prior of \( f \) as a **Gaussian Process (GP)**, where the expected value of \( f(\mathbf{d}) \), integrating over the prior distributions of \( \mathbf{M} \) and \( \mathbf{A} \), is:
\begin{equation*}
    E\left(f(\mathbf{d}) \mid \mathbf{T}\right) = \sum_{h=1}^m \sum_{l=1}^{L_h} E\left(\mu_{hl} \phi_{hl}(\mathbf{d}) \mid \mathbf{T}^{(h)}\right) 
    = \sum_{h=1}^m \sum_{l=1}^{L_h} E\left(\mu_{hl} \mid \mathbf{T}^{(h)}\right) E\left(\phi_{hl}(\mathbf{d})\right) = 0.
\end{equation*}

The covariance function between \( f(\mathbf{d}_i) \) and \( f(\mathbf{d}_j) \) with respect to the prior distributions of \( \mathbf{M} \) and \( \mathbf{A} \) is:
\begin{equation*}
    C(f(\mathbf{d}_i), f(\mathbf{d}_j)) = E\left[(f(\mathbf{d}_i) - 0)(f(\mathbf{d}_j) - 0) \mid \mathbf{T}\right] = \sum_{h=1}^m \sum_{l=1}^{L_h} E\left(\phi_{hl}(\mathbf{d}_i) \phi_{hl}(\mathbf{d}_j)\right) \text{Var}(\mu_{hl}).
\end{equation*}

Since \( \mu_{hl} \) follows a prior distribution with variance \( \text{Var}(\mu_{hl}) = \frac{\beta_{\mu}}{\alpha_{\mu} - 1} \), we can express the covariance as:
\begin{equation*}
    C(f(\mathbf{d}_i), f(\mathbf{d}_j)) = \frac{\beta_{\mu}}{\alpha_{\mu} - 1} \sum_{h=1}^m \sum_{l=1}^{L_h} E\left(\phi_{hl}(\mathbf{d}_i) \phi_{hl}(\mathbf{d}_j)\right).
\end{equation*}

Next, we compute \( E\left(\phi_{hl}(\mathbf{d}_i) \phi_{hl}(\mathbf{d}_j)\right) \). This expectation can be written as:
\begin{equation*}
    E\left(\phi_{hl}(\mathbf{d}_i) \phi_{hl}(\mathbf{d}_j)\right) = E\left(\phi_{hl}(\mathbf{d}_i; \mathbf{A}^{(h)}) \phi_{hl}(\mathbf{d}_j; \mathbf{A}^{(h)})\right).
\end{equation*}

Expanding this expectation as a product over all indices \( \eta \in P_{hl}(\mathbf{d}_i) \), we get:
\begin{equation*}
    E\left(\phi_{hl}(\mathbf{d}_i) \phi_{hl}(\mathbf{d}_j)\right) = \prod_{\eta \in P_{hl}(\mathbf{d}_i)} \int dA^{(h)}_{\eta} \, P(A_{\eta}^{(h)}) \, P_{\eta}(\mathbf{d}_i; A_{\eta}^{(h)}) P_{\eta}(\mathbf{d}_j; A_{\eta}^{(h)}).
\end{equation*}

This simplifies to:
\begin{equation*}
    E\left(\phi_{hl}(\mathbf{d}_i) \phi_{hl}(\mathbf{d}_j)\right) = \prod_{\eta \in P_{hl}(\mathbf{d}_i)} \sum_{v=0}^k P_{\eta}(\mathbf{d}_i; A_{\eta}^{(h)} = v) \, P_{\eta}(\mathbf{d}_j; A_{\eta}^{(h)} = v) \, P(A_{\eta}^{(h)} = v).
\end{equation*}

Thus, the covariance between \( f(\mathbf{d}_i) \) and \( f(\mathbf{d}_j) \) can be compactly expressed as:
\begin{equation*}
    C(f(\mathbf{d}_i), f(\mathbf{d}_j)) = \frac{\beta_{\mu}}{\alpha_{\mu} - 1} \sum_{h=1}^m \sum_{l=1}^{L_h} \prod_{\eta \in P_{hl}(\mathbf{d}_i)} \sum_{v=0}^k P_{\eta}(\mathbf{d}_i; A_{\eta}^{(h)} = v) \, P_{\eta}(\mathbf{d}_j; A_{\eta}^{(h)} = v) \, P(A_{\eta}^{(h)} = v).
\end{equation*}
\item Conditional on \( \{\hat{\bfT}, \hat{\bfM}^{(-h)}, \hat{\bfA}, \hat{\sigma}^2, \hat{\sigma_{\mu}}^2\} \), we have that:
\begin{equation*}
    f(\bfd) = \bfM^{(h)T}\hat{\bm\Phi}^{(h)}(\bfd) + \sum_{h'=1, h'\neq h}^m \hat{\bfM}^{(h')T}\hat{\bm\Phi}^{(h')}(\bfd).
\end{equation*}

The conditional distribution of \( \bfM^{(h)} \) is a multivariate Gaussian distribution with a mean \( \hat{\bm\mu}^{(h)} \) and covariance matrix \( \Omega^{(h)} \). The expected value of \( f(\bfd) \), by integrating the function over the conditional distribution of \( \bfM^{(h)} \), is given by:
\begin{align*}
    \hat{\mu}_f^{(h)}(\bfd) &= E\left(f(\bfd) \mid \hat{T}, \hat{\bfM}^{(-h)}, \hat{\bfA}, \hat{\sigma}^2, \hat{\sigma}_{\mu}^2, \bfY \right) \\
    &= \hat{\mu}^{(h)T}\hat{\bm\Phi}^{(h)}(\bfd) + \sum_{h'=1, h' \neq h}^m \hat{\bfM}^{(h')T} \hat{\bm{\Phi}}^{(h')}(\bfd).
\end{align*}

This implies that:
\[
f(\bfd) - \hat{\mu}_f^{(h)}(\bfd) = \left(\bfM^{(h)} - \hat{\mu}^{(h)}\right)^T \hat{\bm\Phi}^{(h)}(\bfd),
\].

The covariance function of \( f \) with respect to the conditional distribution of \( \bfM^{(h)} \) is:
\begin{align*}
    \hat{C}(f_i, f_j) &= E\left(\left(f(\bfd_i) - \hat{\mu}_f^{(h)}(\bfd_i)\right)\left(f(\bfd_j) - \hat{\mu}_f^{(h)}(\bfd_j)\right) \mid \hat{T}, \hat{\bfM}^{(-h)}, \hat{\bfA}, \hat{\sigma}^2, \hat{\sigma}_{\mu}^2, \bfY \right) \\
    &= \hat{\bm\Phi}^{(h)T}(\bfd_i) E\left(\left(\bfM^{(h)} - \hat{\mu}^{(h)}\right)\left(\bfM^{(h)} - \hat{\mu}^{(h)}\right)^T \mid \hat{T}, \hat{\bfM}^{(-h)}, \hat{\bfA}, \hat{\sigma}^2, \hat{\sigma}_{\mu}^2, \bfY \right) \hat{\bm\Phi}^{(h)}(\bfd_j) \\
    &= \hat{\bm\Phi}^{(h)T}(\bfd_i) \Omega^{(h)} \hat{\bm\Phi}^{(h)}(\bfd_j).
\end{align*}
\end{enumerate}
\section{NYC Education}
This appendix contains figures that showcase the importance metrics for both structured and unstructured features for each model assessed. We also include the marginal influence of individuals over 25 with higher educational attainment, represented on the original scale.

\begin{figure}
\begin{center}
\centering\includegraphics[width= 19cm]{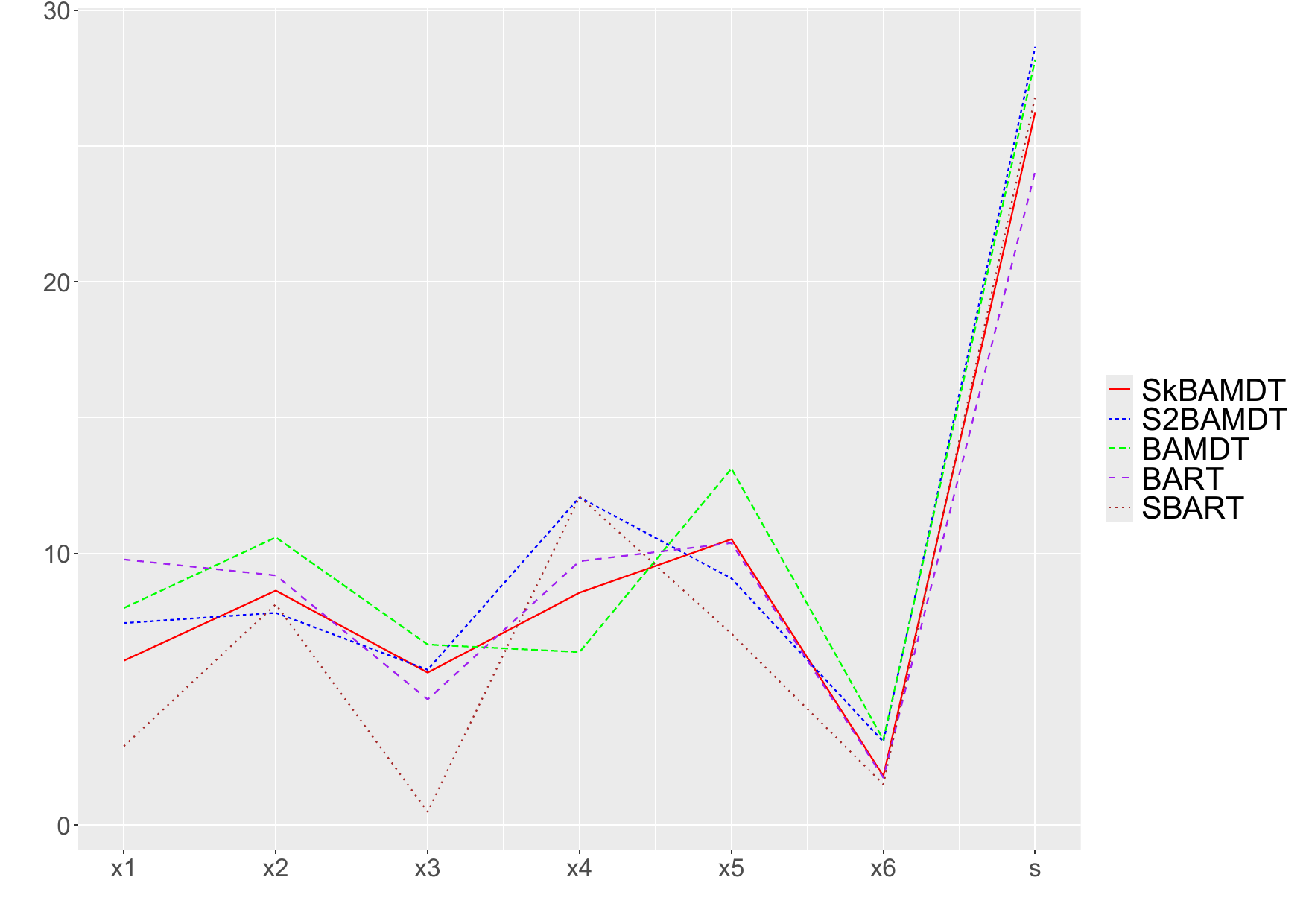}
\end{center}
\caption{The importance metric for each feature given by Sk-BAMDT, S2-BAMDT, BAMDT, BART and SBART. \label{fig:VarC_NY}}
\end{figure}

\begin{figure}
\begin{center}
\centering\includegraphics[width= 14cm]{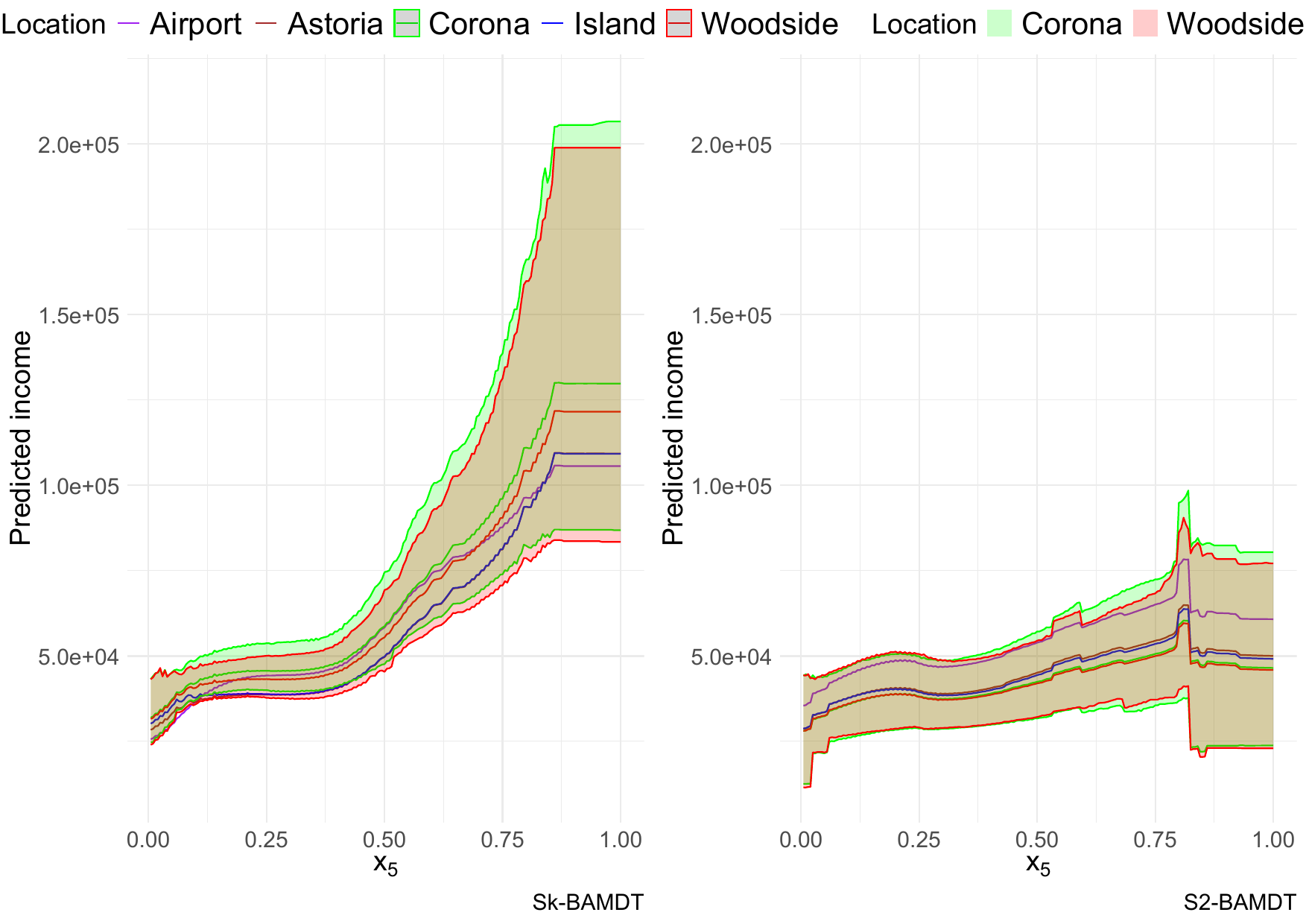}
\end{center}
\caption{ The predicted income versus the population over 25 with at least a bachelor's degree. Colored ribbons represent 95\% predictive credible intervals of two representative locations. \label{fig:VarCX_NY}}
\end{figure}

\section{Simulation study on synthetic data}
This appendix displays a figure that presents the significance metrics for both structured and unstructured features across all models, utilizing one of the fifty simulated datasets for the U-shape example.

\begin{figure}[h]
\begin{center}
\centering\includegraphics[width= 14cm]{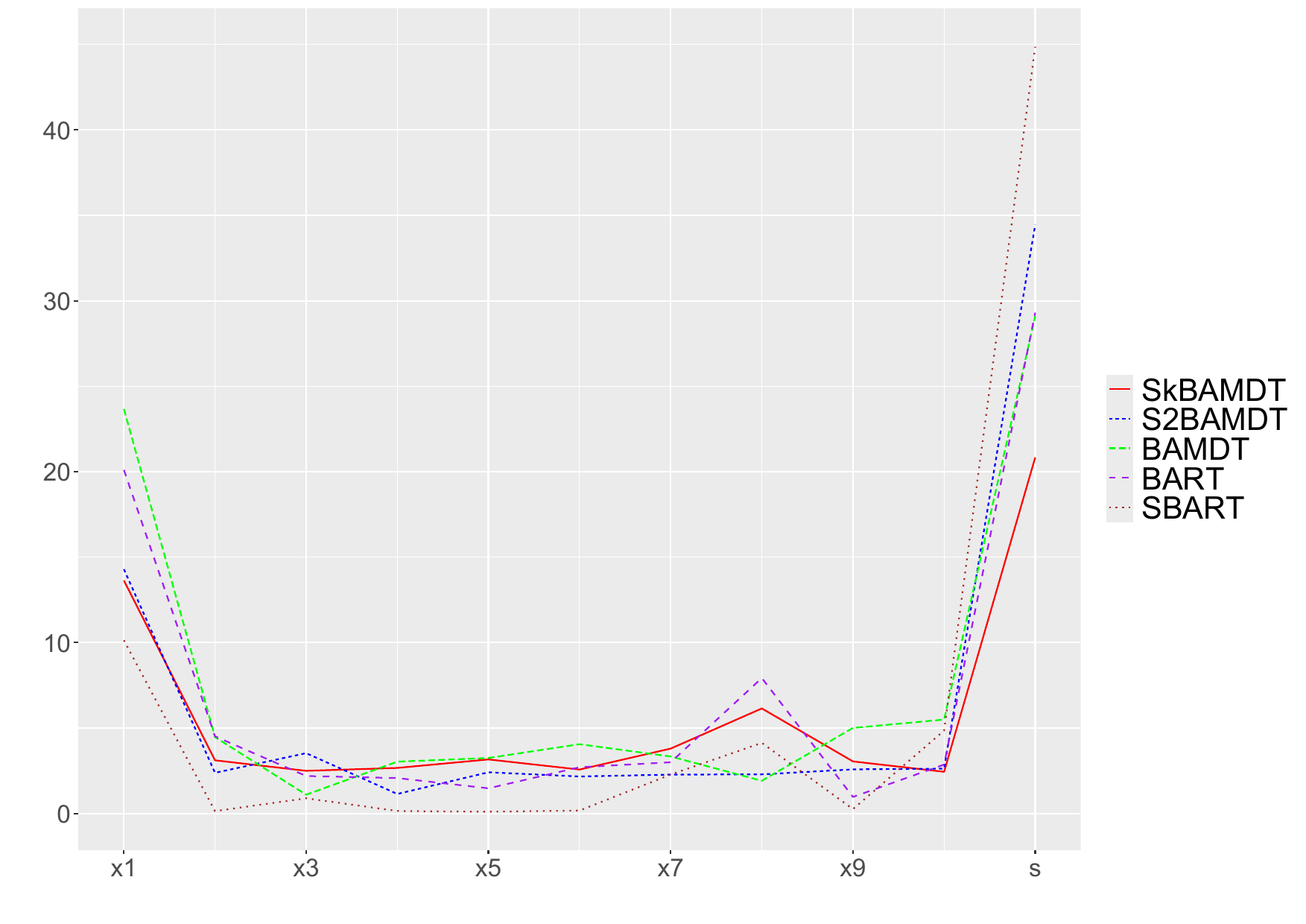}
\end{center}
\caption{The importance metric for each feature for one U-shape simulation. \label{fig:VarC_ScU}}
\end{figure}
\vskip 0.2in
\bibliography{references}
\end{document}